\def\eqref#1{equation~\ref{#1}}
\def\1{\bm{1}}
\DeclareMathAlphabet{\mathsfit}{\encodingdefault}{\sfdefault}{m}{sl}
\SetMathAlphabet{\mathsfit}{bold}{\encodingdefault}{\sfdefault}{bx}{n}
\definecolor{myblue}{HTML}{268BD2}
\definecolor{mygreen}{HTML}{658354}
\definecolor{lightestblue}{HTML}{F5FAFF}
\definecolor{titleblue}{HTML}{E4F1FF} 
\definecolor{frameblue}{HTML}{C0DFFF} 
\definecolor{textblack}{HTML}{000000} 
\newtheorem{theorem}{Theorem}[section]
\newtheorem{lemma}[theorem]{Lemma}
\newtheorem{definition}[theorem]{Definition}
\newtheorem{hypothesis}[theorem]{Hypothesis}
\title{Understanding Language Prior of LVLMs by Contrasting Chain-of-Embedding}
\author{
Lin Long\textsuperscript{*} \quad
Changdae Oh\textsuperscript{*} \quad
Seongheon Park\textsuperscript{} \quad
Sharon Li\textsuperscript{\dag} \\
University of Wisconsin–Madison \\
\texttt{\{llong,changdae,seongheon\_park,sharonli\}@cs.wisc.edu} \\
\textsuperscript{*}Equal contribution \quad \textsuperscript{\dag}Corresponding author
}
\begin{document}

\maketitle

\vspace{-0.85em}
\begin{abstract}
\vspace{-0.25em}
Large vision-language models (LVLMs) achieve strong performance on multimodal tasks, yet they often default to their \emph{language prior} (LP)---memorized textual patterns from pre-training while under-utilizing visual evidence.
Prior analyses of LP mostly rely on input–output probing, which fails to reveal the internal mechanisms governing when and how vision influences model behavior. To address this gap, we present the first systematic analysis of language prior through the lens of chain-of-embedding, which examines the layer-wise representation dynamics within LVLMs. 
Our analysis reveals a universal phenomenon: each model exhibits a \emph{Visual Integration Point} (VIP), a critical layer at which visual information begins to meaningfully reshape hidden representations and influence decoding for multimodal reasoning.
Building on this observation, we introduce the \emph{Total Visual Integration} (TVI) estimator, which aggregates representational discrepancy beyond the VIP to quantify how strongly visual query influences response generation. Across 60 model–dataset combinations spanning 10 contemporary LVLMs and 6 benchmarks, we demonstrate that VIP consistently emerges, and that TVI reliably predicts the strength of language prior. This offers a principled toolkit for diagnosing and understanding language prior in LVLMs. $\quad\quad\quad\quad\quad$ \texttt{Code}: \href{https://github.com/deeplearning-wisc/understanding_lp}{\faGithub}
\end{abstract}
\vspace{-0.5em}
\section{Introduction} \label{sec:intro}
\vspace{-0.35em}
Modern large vision-language models (LVLMs) \citep{openai2025gpt5systemcard,comanici2025gemini,bai2025qwen2,zhu2025internvl3} have extended the boundaries of AI applications at an unprecedented rate. Their remarkable capability in solving highly complex vision-language tasks originated from the internalized rich unimodal knowledge during the pre-training \citep{radford2021learning,oquab2024dinov, brown2020language} and also from the strong multimodal alignment~\citep{liu2023visual,dai2023instructblip,zhu2024minigpt}. Despite their successes, a central challenge remains: LVLMs are prone to over-relying on their \emph{language prior} (LP)—the statistical patterns memorized during large-scale language model pretraining—while under-utilizing the actual visual evidence~\citep{fu2024blink,lee2025vlind,luo2025probing}. This imbalance often results in hallucinations, shortcut reasoning, and brittle generalization when tasks truly demand visual grounding. For example, when asked ``What color is the banana?'', an LVLM may confidently answer ``yellow'' even if the banana in the image is green, demonstrating that the model defaults to its LP. Recent studies \citep{yin2024survey,liu2024mmbench,lee2025vlind} further show that such LP reliance persists across diverse tasks.

Understanding and quantifying LP in LVLMs is thus critical, both for diagnosing their limitations and for guiding the design of more reliable multimodal systems. However, current approaches to analyzing LP primarily rely on input–output probing. For instance, \citet{lee2025vlind} and \citet{luo2025probing} constructed datasets with counterfactual visual input to measure models' performance under challenging visual grounding scenarios, while \citet{deng2025words} evaluate models on modality-conflicting queries to assess modality preference. While useful, such coarse input-output analyses have fundamental limitation to investigate LP of LVLMs in-depth, because: (1) they ignore the rich latent representations inside the model, which may inform how textual and visual signals are integrated, and (2) they cannot disentangle \emph{where} in the model the LP begins to interfere with effective visual integration, leaving per-sample mechanistic interpretation~\citep{bereska2024mechanistic} elusive.

Motivated by this, we propose a new framework for understanding and quantifying language prior, which leverages the chain-of-embedding---the sequence of hidden representations across LVLM layers.  Making use of these latent representations is essential, because they provide direct insight into the inner mechanisms of LP, beyond surface-level outputs. Specifically, our framework contrasts embeddings from vision-text inputs ($Z_{\text{vis}}^{l}$) with those from vision-removed inputs ($Z_{\text{blind}}^{l}$), at each layer $l$.  Based on the contrastive chain-of-embedding, we reveal a striking phenomenon: LVLMs exhibit a \textbf{\emph{Visual Integration Point} (VIP)}, a layer at which visual information begins to meaningfully influence the LVLM's decoding process. 
At and beyond VIP, the distance between $Z_{\text{vis}}^{l}$ and $Z_{\text{blind}}^{l}$ increases substantially for vision-dependent tasks, signaling that the model has begun to actively integrate visual evidence to solve the task. In contrast, vision-independent tasks show a smaller such shift. Thus, VIP captures a critical point where visual input begins to exert meaningful influence on inference, revealing the extent to which the model relies on vision or falls back on language priors.

Inspired by observations from VIP,
we propose quantifying LP through \textbf{\textit{Total Visual Integration} (TVI)}, which measures the effective amount of visual integration that affects the answer decoding of LVLM. Specifically, TVI aggregates distance between contrastive embeddings $Z_{\text{vis}}^{l}$ and $Z_{\text{blind}}^{l}$ across all post-VIP layers to measure the cumulative strength of visual integration. Intuitively, TVI is inversely related to the magnitude of LP: models with strong reliance on language priors exhibit low TVI, while those that leverage vision more deeply exhibit high TVI. 
Through extensive experiments covering 10 contemporary LVLMs and 6 datasets (60 settings combined), we show the universality of the existence of VIP, and that TVI can be a reliable indicator of LP. Moreover, we demonstrate that TVI strongly correlates with performance on benchmarks requiring visual reasoning, outperforming other proxies such as visual attention weights or output divergence. Then, we provide a theoretical interpretation of our measure as well as analytic bounds of it for broader use in practice.
We illustrate the overall framework in Figure~\ref{fig:framework_overview}, and summarize our contribution as follows: 
\begin{enumerate}
    \item We present a novel framework that contrasts the chain-of-embedding of an LVLM for fine-grained analysis of the visual integration and language prior of LVLMs.
    \item Based on this framework, we show that there is a specific layer, VIP, where an LVLM's behavior undergoes a dramatic change, and observe that post-VIP layers' representations play a key role in quantifying the amount of language prior of an LVLM.
    \item Across 10 representative LVLMs and 6 datasets, we consistently demonstrate the existence of VIP, show how we can use it to predict the strength of language prior of an LVLM on a certain sample through TVI, and further present theoretical analyses on our framework.
\end{enumerate}
\begin{figure}[!t]
    \vspace{-1.25em}
    \centering
    \includegraphics[width=0.95\linewidth]{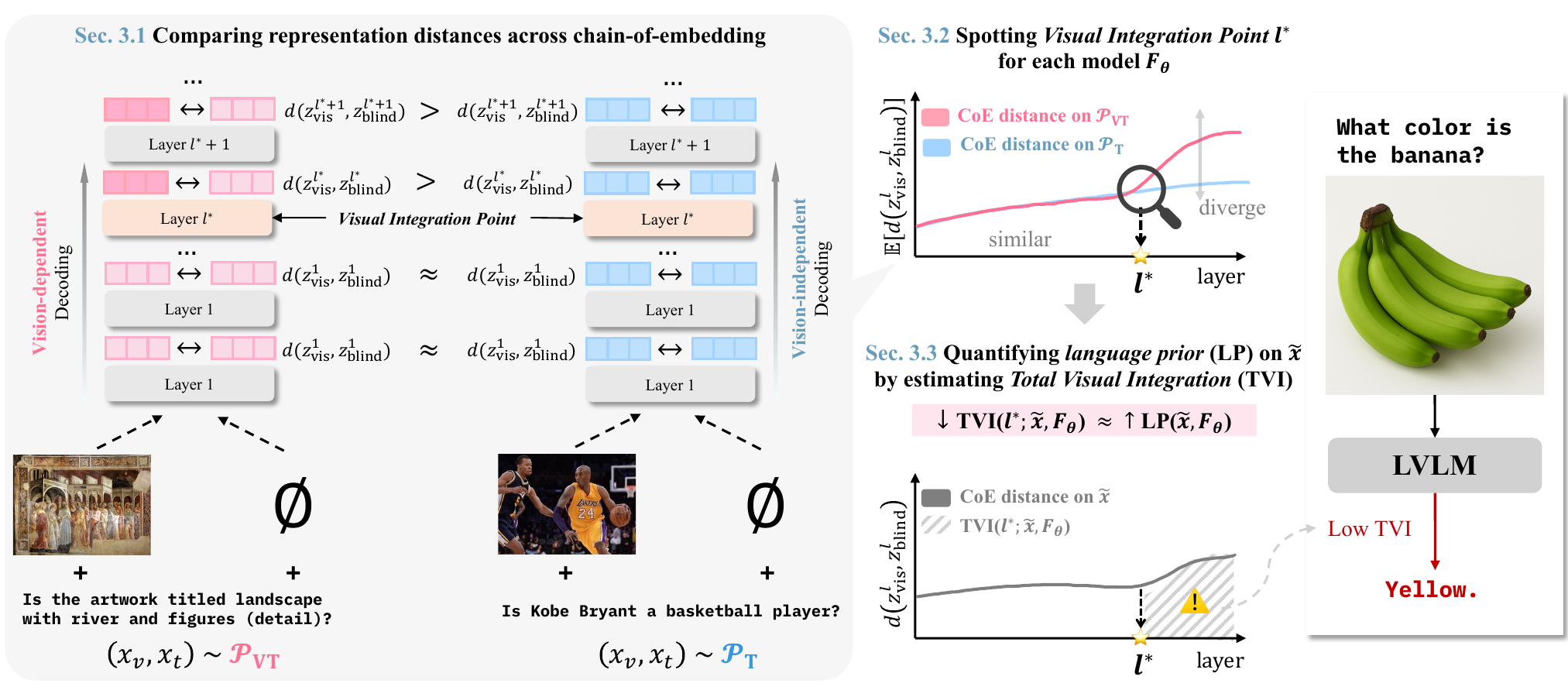}
    \vspace{-0.75em}
    \caption{\textbf{Framework Overview.}
    For data from two distributions $\mathcal{P}_{\text{VT}}$ (vision-dependent) and $\mathcal{P}_{\text{t}}$ (vision-independent), we extract chain-of-embedding for two queries w/ and w/o visual input, and use the expected representation distance to spot \textit{visual integration point} $l^{*}$. Then, estimating \textit{total visual integration} based on $l^{*}$ allows us to quantify LP of an LVLM per sample.
    }
    \label{fig:framework_overview}
    \vspace{-1.15em}
\end{figure}
\section{Problem Statement} \label{sec:ps}
\paragraph{Basic notations.} Let $\mathcal{D}=\{(x_{v},x_{t})_{i}\}_{i=1}^{N}$ denote a dataset of  $N$ image-text queries $(x_{v},x_{t})$, sampled from a population distribution $\mathcal{P}$. Each tuple $(x_{v},x_{t})$ consists of a visual input $x_{v}$, and a natural language query $x_{t}$, expressed in a prompt form. We distinguish $X_v$ from $x_v$ to denote a random variable and its observation (similarly for $X_t$). Then, we define the data structure as follows.
\begin{definition} \label{ass:data}
   We define $\mathcal{P}_{\text{VT}}$ as the \textbf{vision-dependent} distribution, consisting of examples where resolving the textual query requires access to the associated visual input. In contrast, $\mathcal{P}_{\text{T}}$ is the \textbf{vision-independent} distribution, containing examples where the textual query can be answered correctly without visual information (\emph{i.e.}, the text alone suffices).
    A sample dataset $\mathcal{D}$ is constructed with $\mathcal{D}_{\text{VT}}$ and $\mathcal{D}_{\text{T}}$, each containing at least one element from populations $\mathcal{P}_{\text{VT}}$ and $\mathcal{P}_{\text{T}}$, respectively:
    {\small
    \begin{align} \label{ass:data:eq}
        \mathcal{D}=\{\mathcal{D}_{\text{VT}}\cup \mathcal{D}_{\text{T}} : \min(|\mathcal{D}_{\text{VT}}|,|\mathcal{D}_{\text{T}}|)\geq1\}\text{,}
    \end{align}}
    where $|\cdot|$ denotes the cardinality of a set. See examples of $\mathcal{D}_\text{VT}$ and $\mathcal{D}_\text{T}$ in Figure~\ref{fig:framework_overview}.
\end{definition}
Meanwhile, we have an LVLM, $F_{\theta}=f_h \circ f_{L} \circ f_{L-1} \circ \cdots \circ f_{1} \circ f_{0}$, parameterized with $\theta$. Here, $f_{0}$ denotes the composition of the visual encoder, modality connector, and text embedding layer; $(f_{1}, \ldots, f_{L})$ corresponds to the $L$ stacked decoder layers of the LLM; and $f_{h}$ is an output head. The LVLM maps the multimodal input query $(x_v,x_t)$ to a $|\mathcal{V}|-$dimension probability distribution over the vocabulary space $\mathcal{V}$, from which the most likely answer $\hat{y}$ is obtained via the argmax operator. 
\paragraph{Language prior (LP).} An LVLM has a vast amount of knowledge in its parameters obtained during unimodal pretraining and visual instruction tuning of entire model components. Since the pre-training of LLM backbone is far more extensive in quantity and diversity of data, and total computing budget, \textit{LVLMs are prone to over-reliance on memorized statistical textual patterns without integrating visual information during inference}. 
Given an input $x$ and an LVLM $F_{\theta}$, we define the model's reliance on statistical textual patterns as the language prior, $\text{LP}(x,F_{\theta})$. Note that LP is more like a latent property that lacks a gold-standard measurement. Therefore, previous work typically approximates how robust an LVLM is against LP through its performance on carefully curated datasets. 
In contrast, we propose a novel approach that (1) does not require any annotations or careful data curation, (2) tries to quantify LP in a more direct manner, which enables flexible and fine-grained, sample-wise diagnosis for LP of LVLMs. Refer to Appendix~\ref{apdx:sec:relwork} for additional context.
\paragraph{Our position.} 
Although there have been recent attempts to analyze LP in LVLMs, they primarily focus on evaluating model predictions on curated datasets \citep{lee2025vlind, luo2025probing, vo2025vision}, without offering a well-defined or generalizable formulation.
We argue that such coarse input-output analysis is insufficient: it cannot reveal how LP manifests within the model nor how it can be rigorously quantified. In particular, prior approaches overlook the rich latent information encoded inside LVLM---intermediate representations that inform how visual and textual signals are integrated and how biases emerge. Making use of these latent representations is essential because they provide direct insight into the inner mechanisms of LP, beyond surface-level outputs. With this motivation, we pose the following research question: 
\textbf{\textit{Can we derive a formal framework to understand and quantify the language prior of LVLMs through the lens of their internal states?}}
\vspace{-0.65em}
\section{Methodology} \label{sec:method}
\vspace{-0.35em}
\subsection{Chain-of-Embedding and Representation Distance} \label{sec:method:coerd}
\vspace{-0.3em}
In contrast to previous approaches that focus on LVLM output \citep{rahmanzadehgervi2024vision,vo2025vision,lee2025vlind,luo2025probing}, we leverage the \textbf{\textit{chain-of-embedding}} for fine-grained analysis of LVLM, which is defined as a sequence of hidden states across layers, \emph{i.e.}, $(Z^{1}, \cdots,Z^{L})$, where $Z^{l}=f_{l}(X_v,X_t)\in \mathbb{R}^{d_{z}}$\footnote{Although $Z^{l}=f_l(...f_{2}(f_{1}(X_v,X_t)))$ is more precise, we slightly abuse the notation for clarity.} denotes the last-token embedding at $l\in\{1,...,L\}$ as a contextual summary vector\footnote{Such last-token embeddings integrate information from all preceding tokens and are widely used to investigate model's behavior when generating the next token~\citep{jiang2024large,tian2024toward,li2025core}.}. Notably, we contrast embeddings from two different input constructions as below.
{\small
\begin{align}
    Z_{\text{vis}}^{l}&:=f_{l}(X_v,X_t) \tag{embedding from both visual and textual inputs} \\
    Z_{\text{blind}}^{l}&:=f_{l}(\varnothing,X_t) \tag{embedding from textual input only}
\end{align}\normalsize}
Now, given a distance metric $d$, we analyze the difference between these two embeddings per layer by defining an expected \textbf{\textit{representation distance}} and its finite-sample estimator, 
\vspace{-0.15em}
{\small
\begin{equation} \label{eq:erd}
    \mathbf{D}_{l}(\mathcal{P}_{\star},F_{\theta}):=\mathbb{E}_{(X_v,X_t)\sim\mathcal{P}_{\star}}[d(Z_{\text{vis}}^{l},Z_{\text{blind}}^{l})]\text{,} \quad \mathbf{D}_l(\mathcal{D}_{\star},F_{\theta}):=\frac{1}{|\mathcal{D}_{\star}|}\sum_{(x_v,x_t)_{i}\in \mathcal{D}_{\star}}d(z_{\text{vis}}^{l,i},z_{\text{blind}}^{l,i}),
\end{equation}\normalsize}
\vspace{-0.7em}
where $\mathcal{D}_{\star}$ is $\mathcal{D}_{VT}$ or $\mathcal{D}_{T}$, and $\mathcal{P}_{\star}$ is $\mathcal{P}_{VT}$ or $\mathcal{P}_{T}$.

We adopt the cosine distance by default, though other distance functions, including non-metric distances~\citep{deza2009encyclopedia}, can also be valid. An ablation study with alternative metrics is provided in Section~\ref{sec:exp}. Intuitively, $Z_{\text{vis}}^{l}$ should encode distinctive visual semantics that cannot be inferred from text alone, whereas $Z_{\text{blind}}^{l}$ primarily reflects the model's default linguistic expectations. However, the degree of this discrimination can depend on how visual information contributes differently to different data, and across different layers $l$ of the model. 
We elaborate on this in the next section.

\vspace{-0.3em}
\subsection{Visual Integration Point Hypothesis} \label{sec:method:vip}
\vspace{-0.1em}
Deep neural networks are known to develop hierarchical representations across layers~\citep{chen2023improving, fan2024not, jin2025exploring}, where each layer has different types and resolutions of information \citep{joseph_nanda_2024_laying_foundations,skean2025layer,artzy2024attend,jiang2025devils}.
In this paper, we hypothesize that an LVLM has a \textbf{\textit{Visual Integration Point} (VIP)} $l^*$, a critical layer where the model begins to actively leverage visual information to perform task-specific reasoning. Prior to this point, the model primarily engages in general-purpose processing of visual and textual inputs---visual features may be ``seen,'' but not yet ``used'' to guide inference, and the interactions between modalities remain shallow.
This behavioral shift can be reflected in the representation distances: at and beyond VIP, the distance between $Z_{\text{vis}}^{l}$ and $Z_{\text{blind}}^{l}$ increases markedly for vision-dependent tasks ($\mathcal{P}_{\text{TV}}$), signaling that the model has started to utilize visual information to solve the task, while vision-independent tasks ($\mathcal{P}_{\text{T}}$) show smaller such shift. Thus, the notion of VIP captures a key behavior transition inside LVLMs. 
If such a specific point $l^*$ exists, identifying it allows us to localize where the differences between language-prior-dominated and visually grounded inference start to manifest within the model’s internal processing.
We formalize this hypothesis below.
\begin{hypothesis}[\textbf{Existence of the visual integration point}] \label{hyp:vip} Given a distance metric $d(\cdot,\cdot):\mathcal{Z}\times\mathcal{Z} \rightarrow \mathbb{R}$, distributions  $\mathcal{P}_\text{TV}$ and $\mathcal{P}_\text{T}$ (Eq. \ref{ass:data:eq}), and an LVLM $F_{\theta}$ with $L$ layers which produces a chain-of-embedding $(Z^{1},...,Z^{L}$) given input, let $\mathbf{D}_{l}$ be an expected representation distance defined as Eq. \ref{eq:erd}. Then, there exists a visual integration point $l^{*}$ that discerns $\mathbf{D}_{l}$ between $\mathcal{P}_{\text{VT}}$ and $\mathcal{P}_{\text{T}}$, that is,
{\vspace{-0.1em}\small
    \begin{align} \label{hyp:vip:eq}
        \exists\;l^{*}\in\{1,...,L-1\} \quad  \text{s.t.} \quad
        \begin{cases}
            \mathbf{D}_{l}(\mathcal{P}_{\text{VT}},F_{\theta})-\mathbf{D}_{l}(\mathcal{P}_{\text{T}},F_{\theta})  > \tau,  \quad \forall \;l \geq l^{*} \\
            \mathbf{D}_{l}(\mathcal{P}_{\text{VT}},F_{\theta})-\mathbf{D}_{l}(\mathcal{P}_{\text{T}},F_{\theta})  \approx 0,  \quad \forall \;l < l^{*} \\
        \end{cases}\text{,}
    \end{align}\normalsize \vspace{-0.125em}}
    where $\tau \in \mathbb{R}^{+}$ denote a model-dependent constant threshold for each data distribution\footnote{We manually select the $\tau$ and VIP for each model over the observed distances $\mathbf{D}_{l}$ for analysis convenience (see Appendix~\ref{apdx:sec:vip} and~\ref{apdx:sec:implementation} for details on this manual selection and an automatic selection method as well).}.
\end{hypothesis}
\begin{figure}[!h]
    \centering
    \vspace{-0.15em}
    \includegraphics[width=0.97\linewidth]{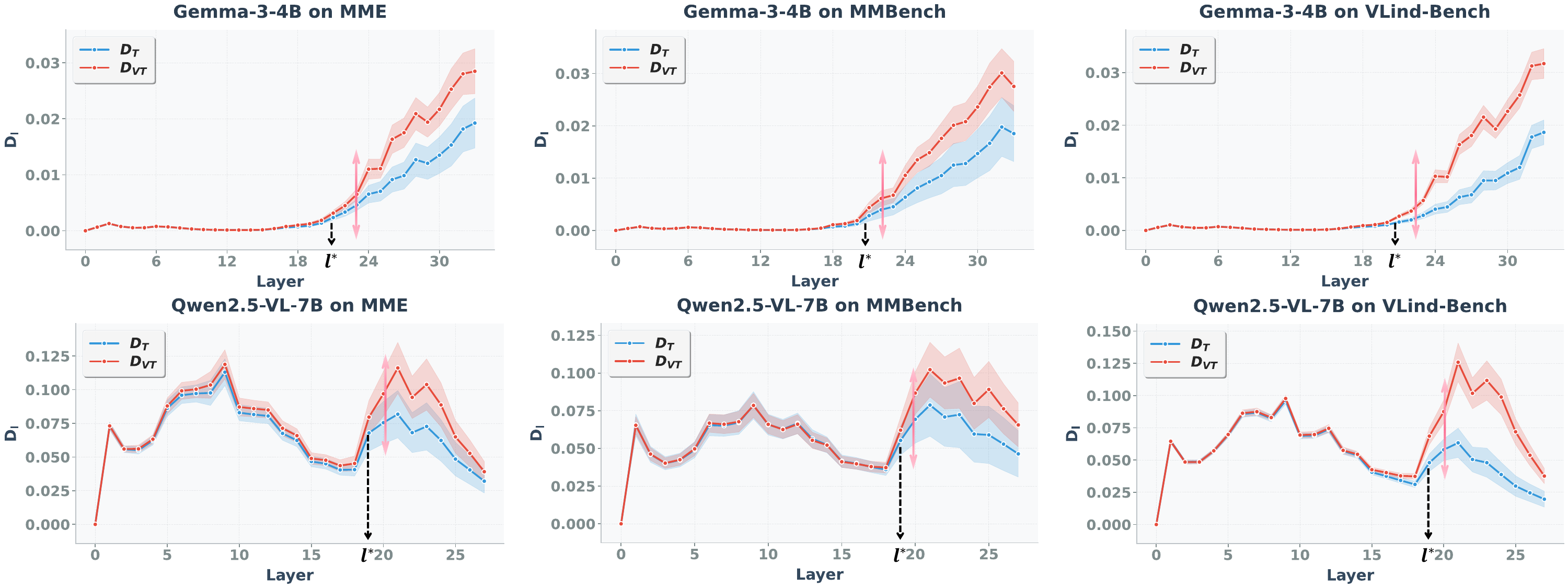}
    \vspace{-0.8em}
    \caption{\textbf{Visual Integration Point.} We consistently observe that there is a specific layer $l^{*}$ that clearly distinguish the distance between $Z^{l}_{\text{vis}}$ and $Z^{l}_{\text{blind}}$ across two groups $\mathcal{D}_{\text{VT}}$ and $\mathcal{D}_{\text{T}}$.}
    \vspace{-0.35em}
    \label{fig:qwen_vip_explain}
\end{figure}
In Figure \ref{fig:qwen_vip_explain}, we plot the representation distance estimated for two groups: $\mathcal{D}_{\text{VT}}$ (vision-dependent) in red and $\mathcal{D}_{\text{T}}$ (vision-independent) in blue, across all layers in \texttt{Qwen2.5-VL-7B} \citep{bai2025qwen2} and \texttt{Gemma-3-4B} \citep{team2025gemma}. We evaluate on three representative datasets: MME \citep{yin2024survey}, MMBench \citep{liu2024mmbench}, and VLind-Bench \citep{lee2025vlind}. Since these datasets do not explicitly annotate the degree of visual dependency for each instance ($\mathcal{P}_{\text{VT}}$ \emph{vs.} $\mathcal{P}_{\text{T}}$), we partition each dataset $\mathcal{D}$ into two auxiliary groups: $\mathcal{D}_{\text{VT}}=\{(x_v,x_t)\in \mathcal{D}:F_{\theta}(x_v,x_t) \neq F_{\theta}(\varnothing,x_t)\}$ and $\mathcal{D}_{\text{T}}=\{(x_v,x_t)\in \mathcal{D}:F_{\theta}(x_v,x_t) = F_{\theta}(\varnothing,x_t)\}$. This split leverages the prediction agreement between multimodal and text-only inputs as a proxy for task type: if two predictions differ, the sample must have demanded visual information to the model, suggesting membership in $\mathcal{P}_{\text{VT}}$ likely. 

From Figure~\ref{fig:qwen_vip_explain}, we make four key observations: (1) \textbf{Existence of VIP}. Representation divergence between $\mathcal{D}_{VT}$ and $\mathcal{D}_T$ does not show from the beginning. Instead, for both models, we observe a clear visual integration point ($l^*$), where the representation distance for the $\mathcal{D}_{\text{VT}}$ group rises more sharply compared to $\mathcal{D}_{\text{T}}$ group,
marking the onset of genuine multimodal integration;
(2) \textbf{Behavioral shift across VIP}. We observe a notable increase in the standard deviation of representation distances across VIP. Specifically, before VIP, the model exhibits relatively uniform representation distances across samples, suggesting general-purpose information processing. After VIP, the model's usage of visual information becomes more diverse and instance-dependent to solve a specified task for each query; (3) \textbf{VIP is dataset-agnostic}. Within each model, the location of the VIP is relatively consistent across all datasets. For \texttt{Qwen2.5-VL-7B}, the transition consistently occurs around layers 18–20, and for \texttt{Gemma-3-4B}, the transition is around layers 20–22. This stability suggests that the VIP is primarily the LVLM's intrinsic property, not one driven by dataset-specific biases; and (4) \textbf{Model-specific patterns}. Despite the shared existence of the VIP, the shape of distance across layers differs across models. In \texttt{Qwen2.5-VL-7B}, representation distance grows relatively smoothly before peaking near the middle-to-late layers and then declines. In contrast, \texttt{Gemma-3-4B} exhibits flat trajectories for many early layers, followed by a steep and monotonic rise after VIP. This suggests that each model has a distinctive hierarchical representation derived from its unique designs. 

Overall, these findings highlight not only the universality of the VIP existence, which distinguishes vision-centric decoding (post-$l^{*}$) from general information-gathering behavior (pre-$l^{*}$), but also the variability in how different LVLMs distinctively integrate visual information across depth.

\subsection{Quantifying Language Prior of LVLMs through Total Visual Integration} \label{sec:method:tvi}

Although the visual integration point detects the birth of $\text{LP}(x, F_{\theta})$, we are also (or even more) interested in how strong $\text{LP}(x, F_{\theta})$ is. To quantify this, we define a \textbf{\textit{total visual integration} (TVI)} estimator in Def. \ref{def:tvie}, which measures the total amount of visual integration that effectively affects the answer decoding of LVLM, and thus is inversely related to LP in nature.

\begin{definition}[\textbf{Total visual integration estimator}] \label{def:tvie}
    For an observed input $x=(x_v,x_t)$, define  $x_{\text{vis}}:=(x_v,x_t)$ and $x_{\text{blind}}:=(\varnothing,x_t)$. Given an LVLM $F_{\theta}$ with $L$ decoder layers which produces two sets of chain-of-embedding $(z^{1}_\text{vis},...,z^{L}_\text{vis})$ and $(z^{1}_\text{blind},...,z^{L}_\text{blind})$, we define the empirical estimator for the per-sample total visual integration as follows,
    \begin{align}  \label{def:tvie:eq}
        \text{TVI}(l^*;x,F_{\theta})=\frac{1}{L-l^* + 1}\sum_{l=l^*}^{L}\big[
            d(z^{l}_{\text{vis}},z^{l}_{\text{blind}})
        \big]\text{,}
    \end{align}
    where $z^{l}_{\text{vis}}=f_{l}(x_{\text{vis}})$, $z^{l}_{\text{blind}}=f_{l}(x_{\text{blind}})$, and $d(\cdot,\cdot)$ denotes a distance metric.
\end{definition}

Here $l^{*}$ marks the VIP layer, where visual information begins to meaningfully influence the model's internal states for visually-grounded decoding. The TVI score then measures the cumulative contribution of visual information by averaging representation distances across all subsequent layers ($l \geq l^{*}$). The idea behind TVI is that once the model passes the VIP, its internal representations increasingly reflect effective visual grounding, rather than shallow alignment or language-driven statistical patterns. A higher TVI indicates that visual information is more effectively utilized during the response decoding phase, while a lower $\text{TVI}$ suggests that the model is more likely to remain text-dominated even after $l^{*}$. In this sense, TVI provides a holistic measure of how much the model truly uses vision for actual problem solving: \emph{a strong LP corresponds to weak or shallow visual integration (low TVI), while effective multimodal reasoning corresponds to high TVI}.

\begin{wraptable}{r}{0.425\textwidth}
\vspace{-1.6em}
\centering
\caption{\textbf{Spearman’s rank correlation between prediction correctness and TVI aggregated from different layers.}}
\scriptsize
\vspace{-1em}
\begin{tabular}{@{}lcc@{}}
\toprule
Model         & pre-$l^*$ &  post-$l^*$ \\ \midrule
\texttt{Qwen2.5-VL-7B} 
    & \makecell{0.1489 \\ ($p=0.002$)} 
    & \cellcolor{lightestblue}\makecell{\textbf{0.7241} \\ ($p<0.001$)} \\
\texttt{Gemma3-4B}     
    & \makecell{0.4659 \\ ($p<0.001$)} 
    & \cellcolor{lightestblue}\makecell{\textbf{0.7174} \\ ($p<0.001$)} \\ 
\bottomrule
\end{tabular}\label{tab:vip_preafter_acc}
\vspace{-1.15em}
\end{wraptable}

To investigate the distinction between pre-$l^{*}$ and post-$l^{*}$ phases in visual integration, we analyze Spearman's rank correlation between them and answer correctness on VLind-Bench~\citep{lee2025vlind}, which requires visual reasoning. The results in Table \ref{tab:vip_preafter_acc}  show that correlations are weak and statistically less-significant when TVI is computed over pre-$l^{*}$ layers. In contrast, the post-$l^{*}$ aggregation yields remarkable correlations with the prediction correctness, indicating that only after the VIP, the representation distance becomes strongly associated with task performance, thereby serving as a reliable indicator of effective visual integration. In this paper, we stick with post-$l^*$ aggregation in Definition~\ref{def:tvie}.

Taken together, results from Figure~\ref{fig:qwen_vip_explain} and Table~\ref{tab:vip_preafter_acc} highlight two key insights: (1) the existence of the visual integration point $l^{*}$, where effective representational shifts starts to happen by integrating visual information, and (2) the strong relationship between post-$l^{*}$ TVI and downstream performance on vision-dependent tasks. These findings demonstrate that VIP and TVI provide a principled toolkit for analyzing visual integration and language prior in LVLMs. We summarize our findings below.

\begin{tcolorbox}[mybox, title=Summary of preliminary findings]
\begin{enumerate}[leftmargin=*, label=\arabic*.]
    \item The layer-wise expected representation distance between $\mathcal{D}_{\text{VT}}$ and $\mathcal{D}_{\text{T}}$, \emph{i.e.}, $\mathbf{D}_l(\mathcal{D}_{\text{VT}},F_{\theta})-\mathbf{D}_{l}(\mathcal{D}_{\text{T}},F_{\theta})$, shows a sudden bump up after a specific layer $l^{*}$, while marginal before $l^{*}$.
        \item The aggregated distance $\frac{1}{L-l^* + 1}\sum_{l=l^*}^{L}\big[
            d(z^{l}_{\text{vis}},z^{l}_{\text{blind}}))
        \big]$ over post-$l^{*}$ layers serves as a reliable indicator of language prior, particularly for datasets requiring visual reasoning.
\end{enumerate}
\end{tcolorbox}
\section{Extended Experiments} \label{sec:exp}
Building on the visual integration measurement introduced in the previous section, we conduct additional experiments to assess its empirical validity. Furthermore, we designed a set of in-depth analyses to explore the relationship between visual integration and the language priors in LVLMs.

\begin{figure}[!t]
    \centering
    \vspace{-0.25em}
    \includegraphics[width=\linewidth]{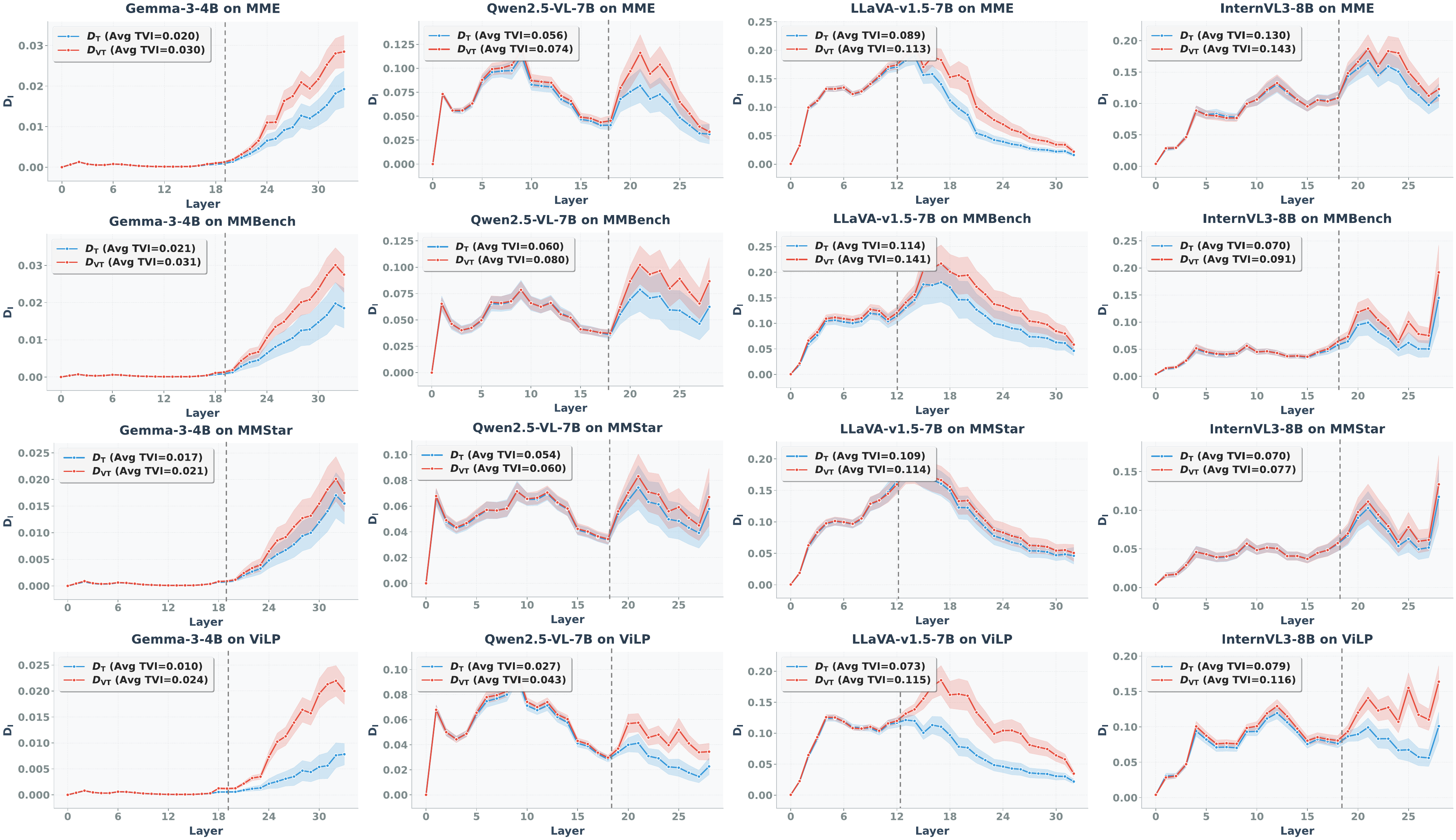}
    \vspace{-0.9em}
    \caption{\textbf{VIPs of different models observed across different datasets.} Our novel framework, fueled by contrasting chain-of-embedding, allows us to consistently observe VIP across multiple models and datasets, and further enables us to estimate TVI to measure language prior.}
    \label{fig:main_results}
    \vspace{-0.75em}
\end{figure}

\paragraph{VIP consistently emerges across different datasets and models.}
We extend the experimental setups described in Section~\ref{sec:method} to a broader range of 6 datasets and 10 LVLMs, including \texttt{Qwen2.5-VL-7B}~\citep{bai2025qwen2}, \texttt{InternVL3-8B}~\citep{zhu2025internvl3}, \texttt{Gemma-3-4B}~\citep{team2025gemma}, \texttt{LLaVA-v1.5-7B}~\citep{liu2023improvedllava}, \texttt{Eagle2.5-8B}~\citep{chen2025eagle}, \texttt{Llama-3.2-11B-Vision}\footnote{\url{https://huggingface.co/meta-llama/Llama-3.2-11B-Vision-Instruct}}, \texttt{LLaVA-NeXT-Vicuna-7B}~\citep{liu2024llavanext}, \texttt{LLaVA-OV-Qwen2-7B}~\citep{li2024llava} \texttt{SmolVLM}~\citep{marafioti2025smolvlm}, and \texttt{InstructBLIP-Vicuna-7B}~\citep{dai2023instructblip}. For the datasets, we consider general VQA benchmarks including MME~\citep{chaoyou2023mme}, MMBench~\citep{liu2024mmbench}, MMStar~\citep{chen2024we}, and MMMU~\citep{yue2024mmmu}. We also incorporate two benchmarks specifically designed for language prior evaluation, which are VLind-Bench~\citep{lee2025vlind} and ViLP~\citep{luo2025probing}. This results in a combination of \textbf{60 experimental settings}. Implementation details, including data statistics, generation configuration, strategy for VIP selection, etc., are provided in Appendix~\ref{apdx:sec:implementation}. As illustrated in Figure~\ref{fig:main_results}, the emergence of VIP is remarkably consistent across all settings: for each model, there exits a clear transition layer $l^*$ where the distance between embeddings $Z_{\text{vis}}^{l}$ and $Z_{\text{blind}}^{l}$ increases more significantly for vision-dependent group ($\mathcal{D}_{\text{TV}}$), compared to the vision-independent group ($\mathcal{D}_{\text{T}}$). These results highlight the universality of the VIP existence. Due to the space limit, we defer the complete experimental results to Appendix~\ref{apdx:sec:additional_exp}.

\paragraph{TVI reliably differentiates strong \emph{vs.} weak language prior.}

\begin{figure}[!ht]
    \centering
    \includegraphics[width=0.95\linewidth]{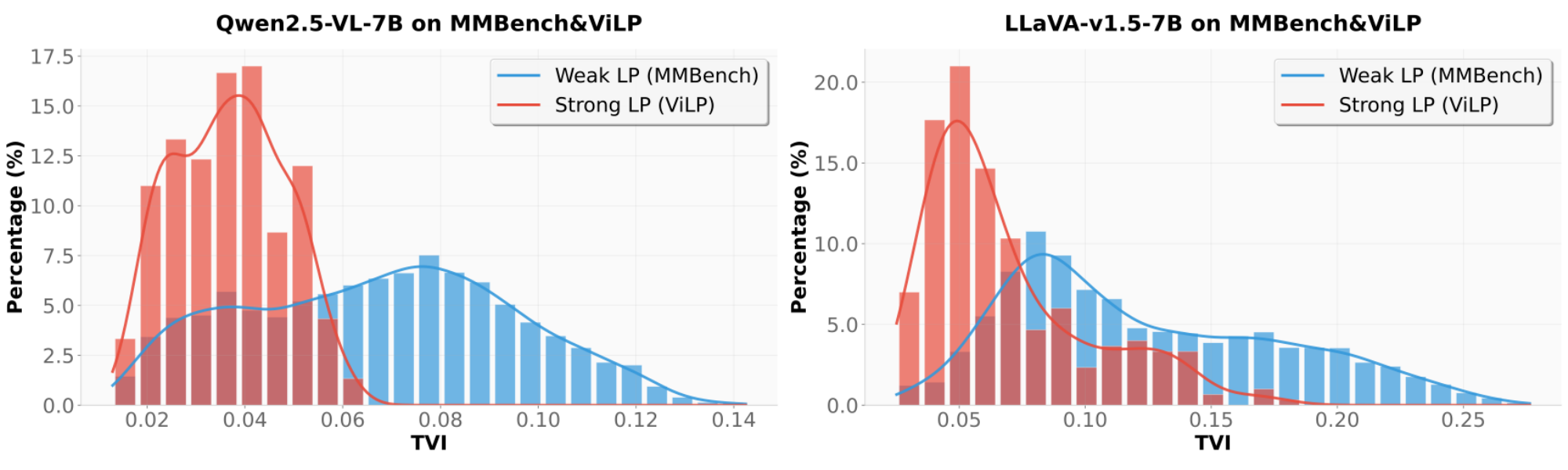}
    \caption{\textbf{TVI under language priors of different strengths.} We see that TVI effectively discerns the differences in strength of LP, thereby standing for a reliable measure for LP.} 
    \label{fig:lp_dynamics}
\end{figure}

To examine whether TVI (\emph{c.f.} Definition~\ref{def:tvie}) reliably reflects the strength of the language prior, we contrast results on two complementary datasets: ViLP and MMBench. ViLP is intentionally constructed to induce a \emph{strong} LP on the data side by designing queries where plausible answers can often be inferred from textual patterns or statistical correlations without the need for visual grounding. In contrast, MMBench represents a \emph{weak} LP setting, with less misleading questions that encourage stronger visual grounding for task success. As shown in Figure~\ref{fig:lp_dynamics}, our analysis reveals that datasets with stronger language priors (\emph{e.g.}, ViLP) yield lower TVI values, indicating weaker visual integration in the model, whereas less biased datasets (\emph{e.g.}, MMBench) produce higher TVI values, reflecting stronger use of visual information. This confirms that TVI serves as a reliable quantitative indicator of LP.

\paragraph{Interventional validation for TVI.}
\begin{wraptable}{r}{0.43\textwidth}
\vspace{-1.45em}
\centering
\caption{\textbf{Downstream performance and TVI before and after intervention.}}
\vspace{-0.8em}
\small
\resizebox{\linewidth}{!}{%
\begin{tabular}{@{}lcc@{}}
    \toprule
    & \textbf{Accuracy (\%)} & \textbf{TVI} \\
    \midrule
    Before intervention & 50.00 & 0.038 \\
    After intervention  & 52.33 & 0.144 \\
    \bottomrule
\end{tabular}}
\label{apdx:tab:pai_intervention}
\end{wraptable}
To further verify whether TVI robustly quantifies LP under different inference setups, we conduct a small interventional study. Specifically, we applied an attention-correction-based hallucination mitigation method PAI~\citep{liu2024paying} to \texttt{Qwen2.5-VL-7B} as an inference-time intervention, which promotes the model to pay more attention to visual features, implicitly increasing visual integration. As shown from the results in Table~\ref{apdx:tab:pai_intervention}, the intervention not only improves task performance but also yields a substantial increase in TVI. This observation demonstrates that TVI faithfully reflects changes in the model’s degree of visual integration, thereby providing robust evidence that it is a reliable metric for quantifying LP.

\paragraph{Comparison to existing proxy for language prior.}

\begin{wraptable}{r}{0.55\textwidth}
\vspace{-1.45em}
\centering
\caption{\textbf{Spearman's rank correlation between different metrics and answer prediction correctness.}}
\vspace{-0.8em}
\small
\resizebox{\linewidth}{!}{%
\begin{tabular}{@{}ccccc@{}}
    \toprule
    & \multicolumn{2}{c}{\texttt{Qwen2.5-VL-7B}} & \multicolumn{2}{c}{\texttt{InternVL-3-8B}} \\
    \cmidrule(lr){2-3} \cmidrule(lr){4-5}
    Metric   & VLind &  ViLP & VLind &  ViLP \\ \midrule
    \rowcolor{lightestblue}
    TVI      & \makecell{\textbf{0.7155} \\ $(p<0.001)$} 
             & \makecell{\textbf{0.6335} \\ $(p<0.001)$} 
             & \makecell{\textbf{0.6727} \\ $(p<0.001)$} 
             & \makecell{\textbf{0.5709} \\ $(p<0.001)$} \\
    \midrule
    \makecell{Visual \\ Attention}  & \makecell{0.0871 \\ $(p=0.075)$} 
             & \makecell{-0.0364 \\ $(p=0.530)$} 
             & \makecell{0.4967 \\ $(p<0.001)$} 
             & \makecell{0.0746 \\ $(p=0.197)$} \\
    \midrule
    \makecell{Output \\ Divergence}   & \makecell{0.2978 \\ $(p<0.001)$} 
             & \makecell{0.5084 \\ $(p<0.001)$} 
             & \makecell{0.1627 \\ $(p<0.001)$} 
             & \makecell{0.5615 \\ $(p<0.001)$} \\
    \bottomrule
\end{tabular}}
\label{tab:metrics_comp}
\vspace{-1.0em}
\end{wraptable}
There are alternative approaches to explain LP proposed in previous works, which rely on output-based or attention-based heuristics by assuming (1) LP manifests as high similarity between output tokens generated with and without visual input~\citep{chen2025prioritizing,xie2024v}, or (2) LP arises due to insufficient attention being allocated to visual tokens~\citep{liu2025more}. 
In Table~\ref{tab:metrics_comp}, we compare our TVI with two existing approaches (see Appendix~\ref{apdx:sec:implementation} for detailed formulation), average visual attention and output divergence, by conducting the Spearman's rank correlation analysis between these measures and the correctness of model predictions on two datasets, which all require integrating visual information to produce correct answers. Our TVI consistently exhibits a stronger correlation with output correctness across all datasets and models, suggesting that TVI stands for a reliable indicator of effective visual integration of LVLMs. In contrast, the other approaches show weak and inconsistent correlations in different scenarios. 

We argue that both existing approaches fail to directly capture the true impact of visual integration on the model's generation. In the case of visual attention, the model may assign high weights to irrelevant regions of the image rather than the areas required for correct reasoning, and ultimately fall back on its language prior to generate the answer---resulting in inflated attention scores but weak correlation with language prior. Meanwhile, solely measuring output-level discrepancy does not fully capture fine-grained behavior exhibited in internal representation dynamics---differences that are more fundamental in nature than what can be observed from final outputs. It shows the significance of procedural aggregation in TVI. We provide additional visualization analysis in Appendix~\ref{apdx:sec:analysis}.

\begin{wraptable}{r}{0.35\textwidth}
    \vspace{-0.2em}
    \centering
    \caption{\textbf{Spearman's rank correlation between correctness and TVI under different distance metrics.} Results are based on evaluations using \texttt{Qwen2.5-VL-7B}. All $p$-values are $<0.001$.}
    \small
    \vspace{-0.75em}
    \begin{tabular}{@{}lcc@{}}
    \toprule
    Metric & VLind & ViLP \\ 
    \midrule
    \multicolumn{3}{l}{\textbf{\emph{Embedding-based }}} \\ 
    Cosine Distance  & 0.7155 & 0.6335 \\ 
    L2 Distance            & 0.7123 & 0.6578 \\ 
    \midrule
    \multicolumn{3}{l}{\textbf{\emph{Output-based (w/ logit-lens)}}} \\ 
    KL Divergence      & -0.1693 & 0.2901 \\ 
    JS Divergence      & -0.2261 & 0.2942 \\ 
    \bottomrule
    \end{tabular}
    \label{tab:dist_metrics}
    \vspace{-0.75em}
\end{wraptable}
\paragraph{Ablations on distance metrics.}
To investigate how different choices of distance metric $d$ affect our ability to capture model behavior, we conduct ablation studies with alternative formulations of TVI.
As shown in Table~\ref{tab:dist_metrics}, TVI remains a strong indicator of model correctness when computed using the L2 distance between latent embeddings. However, when we apply the logit-lens technique~\citep{nostalgebraist2020logitlens}—projecting hidden states at each layer into the output token space and computing divergence between the resulting distributions—the effectiveness of TVI drops significantly.
This degradation suggests that such a projection distorts or suppresses the intermediate behavioral differences that occur during decoding. The output space, shaped by the language modeling head, inherently filters latent representations through a decoding-biased lens, which may obscure subtle but meaningful cross-modal integration patterns. These observations reinforce our central contribution: to faithfully capture the behavioral dynamics of vision-language models, it is essential to examine the internal processing trajectory within the latent representation space, rather than relying on surface-level discrete outputs or their immediate projections. Additional visualization analysis is provided in Appendix~\ref{apdx:sec:analysis}.

\paragraph{Varying model scales.}
\begin{wrapfigure}{r}{0.625\textwidth}
    \vspace{-0.5em}
    \centering
    \includegraphics[width=1.05\linewidth]{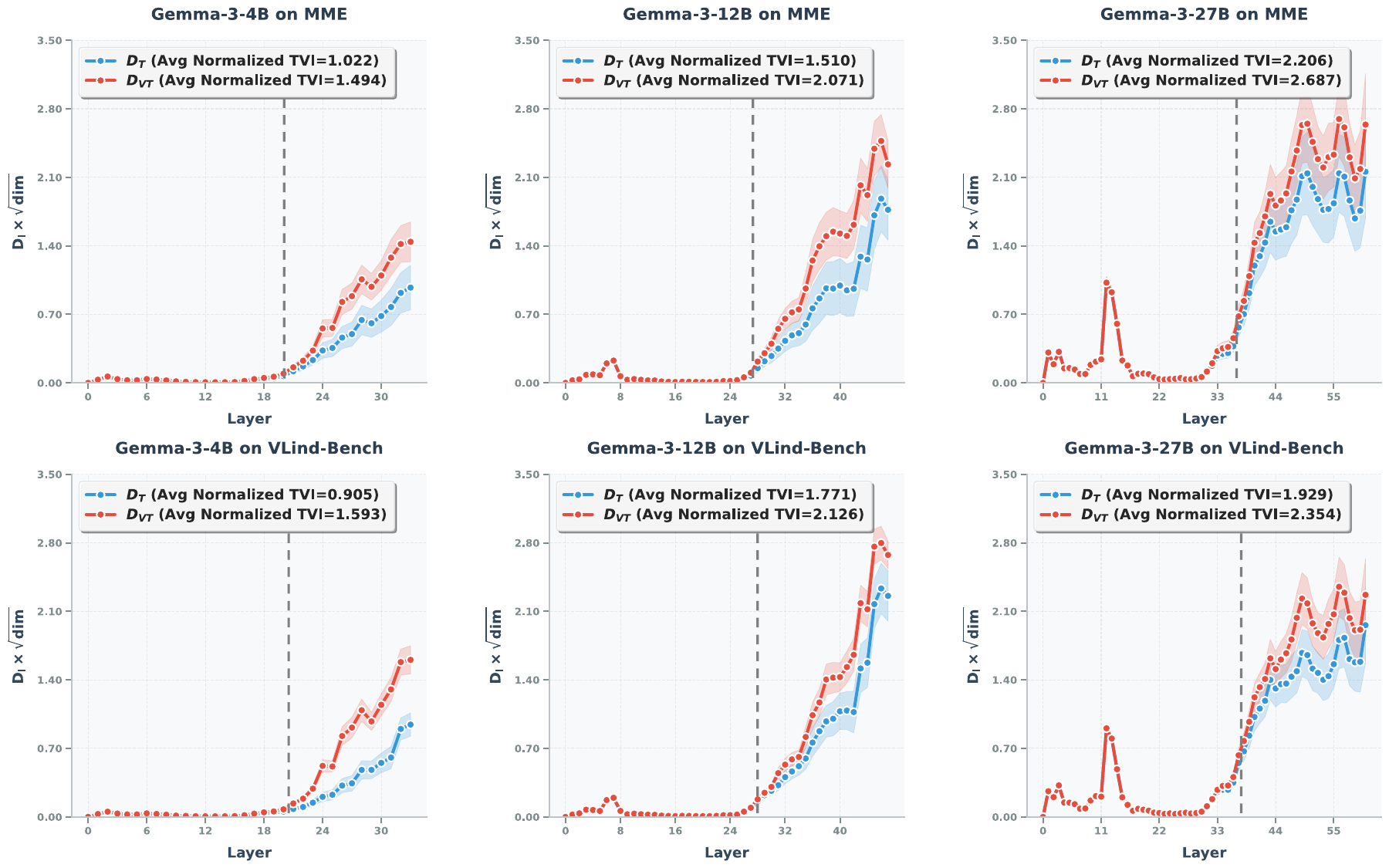}
    \caption{\textbf{Varying model scales.} VIP and the dimension-normalized TVI analysis results for three variants of \texttt{Gemma-3} model family.}
    \label{fig:model_scale}
    \vspace{-0.75em}
\end{wrapfigure}
We further examine whether our findings generalize across models of different scales. As shown in Figure~\ref{fig:model_scale}, the VIP consistently emerges across models of varying sizes (4B, 12B, and 27B), underscoring the robustness and generality of our proposed behavioral analysis framework.
Interestingly, we also find that the VIP tends to appear at a similar relative depth, which is approximately 60\% of the total number of layers, regardless of model size. In addition, after normalizing by the dimensionality of hidden states, we observe that the average normalized TVI is consistently higher in larger models on both $\mathcal{D}_{\text{VT}}$ and $\mathcal{D}_{\text{T}}$. This suggests that larger models are more effective at leveraging visual information in a uniform manner across diverse input types, thereby exhibiting greater robustness to misleading language priors. These observations collectively reinforce the broader applicability of our framework in analyzing visual integration behavior across model scales.

\paragraph{Practical utility.}
We also investigate the practical applicability of TVI here by providing a concrete example to make use of our findings to actually improve LVLMs. Specifically, we leverage TVI as an additional regularization term along with the vanilla cross-entropy loss for next-token prediction during instruction tuning. That is, given the input $x=(x_v,x_t)$ and instruction $y$, we adjusted the original LLaVA training objective~\citep{liu2023visual} to:
\begin{equation}
    \mathcal{L}(x,y;\theta)=-\log F_{\theta}(y | x)-\lambda \cdot \text{TVI}(l^{*};x,F_{\theta})\text{,}
\end{equation}
where the strength of the regularization is controlled by $\lambda$ (we set as 0.03).
Due to resource limitations, we trained the model on a 60k randomly sampled subset of \texttt{llava\_v1\_5\_mix665k}. All other hyperparameters used for visual instruction tuning remain the default of~\citet{liu2023visual}.

\begin{wraptable}{r}{0.53\textwidth}
\vspace{-1.45em}
\centering
\caption{\textbf{Effect of TVI regularization on downstream performance, MME dataset~\citep{chaoyou2023mme} with LLaVA-v1.5-7B.}}
\vspace{-0.8em}
\small
\resizebox{0.925\linewidth}{!}{%
\begin{tabular}{@{}lcc@{}}
    \toprule
    & \textbf{Perception} & \textbf{Reasoning} \\ 
    \midrule
    LLaVA-v1.5 & 1369.75 & 298.21 \\
    LLaVA-v1.5 w/ TVI & \textbf{1400.44} & \textbf{321.43} \\
    \bottomrule
\end{tabular}}
\label{apdx:tab:tvi_reg_results}
\vspace{-1.0em}
\end{wraptable}
The result is shown in Table~\ref{apdx:tab:tvi_reg_results}, where the performance improvement indicates that explicitly encouraging stronger visual integration (via TVI) leads to better downstream task performance. This highlights TVI's potential as a helpful training regularizer for improving the visual perception and reasoning of LVLMs in practice.

\paragraph{Additional empirical analyses.}
In addition, we deepen our understanding of the proposed framework by providing additional analysis in Appendix~\ref{apdx:sec:analysis}, including different aggregation strategies for TVI calculation, instruction-level perturbation, image-text vs. image-only chain comparison, VIP and TVI evolution across training stages, and case studies on TVI failure cases.
\section{Theoretical Analysis} \label{sec:theory}
Next, we introduce a new interpretation for our measure, $\mathbf{D}_l(\mathcal{P}_{\text{VT}},F_{\theta})-\mathbf{D}_l(\mathcal{P}_{\text{T}},F_{\theta})$, that locates VIP (Theorem~\ref{thm:info}) and discuss how we can practically employ the expected representation distance (Theorem~\ref{thm:hdiv}) through theoretical analyses. All the proofs and an additional theorem that justifies the use of our empirical representation distance (Lemma~\ref{thm:testat:apdx}) are given in Appendix~\ref{apdx:sec:thm}.

\paragraph{Information-theoretic interpretation on representation divergence.}
By recasting the representation distance measurement as a density estimation problem, \emph{i.e.}, $d(Z^{l}_{\text{vis}},Z^{l}_{\text{blind}})\propto -\log \hat{p}_{\text{T}}(Z^{l})$ (please see Lemma \ref{thm:nll:apdx}), we show that the difference in expected representation distances, $\mathbf{D}_l(\mathcal{P}_{\text{VT}},F_{\theta})-\mathbf{D}_l(\mathcal{P}_{\text{T}},F_{\theta})$, which we call representation divergence here, can be interpreted as a relative distributional discrepancy that measures how far the density estimator $\hat{p}_{\text{T}}(Z^{l})$, defined by $d(Z^{l}_{\text{vis}},Z^{l}_{\text{blind}})$, from a population distribution $p_{\text{VT}}(Z^l)$ compared to $p_{\text{T}}(Z^l)$ in Theorem~\ref{thm:info}.

\begin{theorem}  \label{thm:info}
    Let $X=(X_v,X_t)\in\mathcal{X}$ be a random variable from $\mathcal{P}_{\text{VT}}$ or $\mathcal{P}_{\text{T}}$, and  $f_l:\mathcal{X}\rightarrow\mathcal{Z}$ be a layer stack from an LVLM $F_{\theta}$. For $\mathcal{P}_{\text{T}}$, define a density estimator $\hat{p}_{\text{T}}(Z^l):=\mathcal{N}(Z^l;f_{l}(X_t),I)$, and denote $p_{\text{VT}}(Z^l)$ (resp. $p_{\text{T}}(Z^l)$) as the population distribution on $Z^l=f_{l}(X)$ derived from $\mathcal{P}_{\text{VT}}$ (resp. $\mathcal{P}_{\text{T}}$). Then, given $d(Z_1,Z_2):=\frac{1}{2}||Z_1-Z_2||_2^{2}$, the difference in the expected representation distances between $\mathcal{P}_{\text{VT}}$ and $\mathcal{P}_{\text{T}}$, \emph{i.e.}, $\mathbf{D}_l(\mathcal{P}_{\text{VT}},F_{\theta})-\mathbf{D}_l(\mathcal{P}_{\text{T}},F_{\theta})$, can be expressed as follows,
    \begin{equation}
        \text{KL}\big(p_{\text{VT}}(Z^l)||\hat{p}_{\text{T}}(Z^l)\big)-\text{KL}\big(p_{\text{T}}(Z^l)||\hat{p}_{\text{T}}(Z^l)\big) + \bar{\mathbf{H}},
        \label{thm:info:eq}
    \end{equation}
    where $\bar{\mathbf{H}}$ is a constant $H\big(p_{\text{VT}}(Z^l)\big)-H\big(p_{\text{T}}(Z^l)\big)$, and $\text{KL}(\cdot||\cdot)$ denotes the KL divergence.
\end{theorem}

\begin{tcolorbox}[mybox]
\noindent \textbf{Implication.} Theorem \ref{thm:info} tells us that $\mathbf{D}_l(\mathcal{P}_{\text{VT}},F_{\theta})-\mathbf{D}_l(\mathcal{P}_{\text{T}},F_{\theta})$ can be interpreted as a relative proximity of the density estimate $\hat{p}_{\text{T}}$ to each distributions $p_{\text{VT}}$ and $p_{\text{T}}$ with an additive constant $\bar{\mathbf{H}}$. Intuitively, the first term, $\text{KL}(p_{\text{VT}}||\hat{p}_{\text{T}})$, can be understood how $\hat{p}_{\text{T}}$ (estimate of $p_{\text{T}}$) far from the true representation distribution on $\mathcal{P}_{\text{VT}}$ while the second term, $\text{KL}(p_{\text{T}}||\hat{p}_{\text{T}})$, is a quality of the density estimation with $\hat{p}_{\text{T}}$ to approximate $p_{\text{T}}$. This expression converts the expected representational distance of the LVLM $F_{\theta}$ over $p_{\text{VT}}$ and $p_{\text{T}}$ into an information-theoretic divergence, the amount of surprise if we approximate the distribution over $p_{\text{VT}}$ via a blind-representation-centered Gaussian estimator $\hat{p}_{\text{T}}(Z^{l})=\mathcal{N}(\cdot;Z^{l}_{\text{blind}},I)$.
\end{tcolorbox}

\paragraph{Analytic bounds on representation divergence for practical use.} 
We have assumed a fixed model $F_{\theta}$ so far. If one is willing to adapt the model to improve its effective visual integration, the analytic bounds in Theorem \ref{thm:hdiv} described with $\mathcal{H}$-divergence (see Def. \ref{def:hdiv}) can be useful.
\begin{theorem}\label{thm:hdiv}
Let $X=(X_v,X_t)\in\mathcal{X}$ be a random variable of a multimodal input query. Given a stack of LVLM layers $f_l:\mathcal{X}\rightarrow\mathcal{Z}$ and a distance metric $d:\mathcal{Z}\times\mathcal{Z}\rightarrow[0,1]$, define a hypothesis $h=d(f_l(X_v,X_t),f_l(X_t)):\mathcal{X}\rightarrow[0,1]$ and a set of these hypotheses $\mathcal{H}$ that has a pseudo-dimension $c$. Then, for $\mathbf{D}_{l}(\mathcal{P}_{\star},F_{\theta}):=\mathbb{E}_{X\sim\mathcal{P}_{\star}}[h(X)]$ with any $\mathcal{P}_{\text{VT}}$, $\mathcal{P}_{\text{T}}$, and $\mathcal{P}_{\text{M}}:=\frac{\mathcal{P}_{\text{VT}}+\mathcal{P}_{\text{T}}}{2}$, and the empirical distributions $\mathcal{D}_{\text{VT}}\sim\mathcal{P}_{\text{VT}}$ and $\mathcal{D}_{\text{T}}\sim\mathcal{P}_{\text{T}}$ of $N$ samples for each, we have the following bounds w.p. at least $1-\delta$ for $0<\delta<1$,
    \begin{align}
        &\text{i}) \ \ 1-\mathbf{D}_{l}(\mathcal{D}_{\text{T}},F_{\theta})-\frac{1}{2}d_{\bar{\mathcal{H}}}(\mathcal{D}_{\text{VT}},\mathcal{D}_{\text{T}}) - \tilde{\mathcal{O}}_{\delta} \leq \mathbf{D}_{l}(\mathcal{P}_{\text{VT}},F_{\theta}), \label{thm:hdiv:eq} \\
        &\text{ii}) \ \ \frac{1}{2}-\frac{1}{4}d_{\bar{\mathcal{H}}}(\mathcal{D}_{\text{VT}},\mathcal{D}_{\text{T}})-\tilde{\mathcal{O}}_{\delta} \leq \mathbf{D}_l(\mathcal{P}_{\text{M}},F_{\theta}) \leq \frac{1}{2}+\frac{1}{4}d_{\bar{\mathcal{H}}}(\mathcal{D}_{\text{VT}},\mathcal{D}_{\text{T}})+\tilde{\mathcal{O}}_{\delta} 
    \label{thm:hdiv_tside:eq}
    \end{align} 
where $\bar{\mathcal{H}}:=\{\mathbb{I}_{|h(X)-h'(X)|>t}:h,h'\in\mathcal{H},0\leq t \leq 1 \}$ and $\tilde{\mathcal{O}}_{\delta}:=\mathcal{O}(\sqrt{\frac{1}{N}(\log \frac{1}{\delta}+c\log \frac{N}{c})})$.
\end{theorem}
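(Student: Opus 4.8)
The plan is to treat $h=d(f_l(X_v,X_t),f_l(X_t))$ as a single real-valued hypothesis taking values in $[0,1]$, and to import the domain-adaptation machinery built around the $\mathcal{H}$-divergence (the Ben-David–style bounds, in the version adapted to $[0,1]$-valued losses via the binarized class $\bar{\mathcal{H}}$). First I would set up uniform-convergence control: since $\mathcal{H}$ has pseudo-dimension $c$, a standard Rademacher/pseudo-dimension argument gives, w.p.\ at least $1-\delta$, a uniform bound $\sup_{h\in\mathcal{H}}|\mathbf{D}_l(\mathcal{D}_\star,F_\theta)-\mathbf{D}_l(\mathcal{P}_\star,F_\theta)| \le \tilde{\mathcal{O}}_\delta$ with $\tilde{\mathcal{O}}_\delta=\mathcal{O}(\sqrt{\tfrac{1}{N}(\log\tfrac{1}{\delta}+c\log\tfrac{N}{c})})$, for each of $\mathcal{P}_{\text{VT}}$ and $\mathcal{P}_{\text{T}}$ (and hence for $\mathcal{P}_{\text{M}}$). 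I would similarly control the empirical $\mathcal{H}$-divergence: $|d_{\bar{\mathcal{H}}}(\mathcal{D}_{\text{VT}},\mathcal{D}_{\text{T}})-d_{\bar{\mathcal{H}}}(\mathcal{P}_{\text{VT}},\mathcal{P}_{\text{T}})|$ is absorbed into the same rate, since $\bar{\mathcal{H}}$ inherits finite VC dimension $O(c)$ from the pseudo-dimension of $\mathcal{H}$. Collapsing all these error terms into a single $\tilde{\mathcal{O}}_\delta$ (adjusting constants and the $\log\frac1\delta$ via a union bound) reduces everything to an inequality among \emph{population} quantities.

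For part (i), the key population identity is the triangle-type inequality underlying the $\mathcal{H}$-divergence: for any fixed $h\in\mathcal{H}$,
\begin{equation}
\big|\mathbf{D}_l(\mathcal{P}_{\text{VT}},F_\theta)-\mathbf{D}_l(\mathcal{P}_{\text{T}},F_\theta)\big| \le \tfrac{1}{2}d_{\bar{\mathcal{H}}}(\mathcal{P}_{\text{VT}},\mathcal{P}_{\text{T}}),
\label{eq:hdiv-pop-bound}
\end{equation}
which follows because the binarized class $\bar{\mathcal{H}}=\{\mathbb{I}_{|h-h'|>t}\}$ witnesses exactly the difference in expectations of any $h$ under the two distributions (taking $h'$ constant, or more carefully integrating the layer-cake identity $\mathbb{E}[h]=\int_0^1 \Pr(h>t)\,dt$ against the indicator class). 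Rearranging \eqref{eq:hdiv-pop-bound} gives $\mathbf{D}_l(\mathcal{P}_{\text{VT}},F_\theta)\ge \mathbf{D}_l(\mathcal{P}_{\text{T}},F_\theta)-\tfrac12 d_{\bar{\mathcal{H}}}(\mathcal{P}_{\text{VT}},\mathcal{P}_{\text{T}})$; now substitute the empirical surrogates for $\mathbf{D}_l(\mathcal{P}_{\text{T}},F_\theta)$ and $d_{\bar{\mathcal{H}}}$ (paying $\tilde{\mathcal{O}}_\delta$) and use $d\le 1$ so that $\mathbf{D}_l(\mathcal{P}_{\text{T}},F_\theta)=1-(1-\mathbf{D}_l(\mathcal{P}_{\text{T}},F_\theta))\ge 1-\mathbf{D}_l(\mathcal{D}_{\text{T}},F_\theta)-\tilde{\mathcal{O}}_\delta$, yielding \eqref{thm:hdiv:eq}. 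For part (ii), write $\mathbf{D}_l(\mathcal{P}_{\text{M}},F_\theta)=\tfrac12(\mathbf{D}_l(\mathcal{P}_{\text{VT}},F_\theta)+\mathbf{D}_l(\mathcal{P}_{\text{T}},F_\theta))$ by linearity of expectation in $\mathcal{P}_{\text{M}}=\tfrac12(\mathcal{P}_{\text{VT}}+\mathcal{P}_{\text{T}})$; the two-sided bound then follows by observing that each of $\mathbf{D}_l(\mathcal{P}_{\text{VT}},F_\theta),\mathbf{D}_l(\mathcal{P}_{\text{T}},F_\theta)\in[0,1]$ and that their difference is at most $\tfrac12 d_{\bar{\mathcal{H}}}$ in absolute value (from \eqref{eq:hdiv-pop-bound}), so the average lies within $\tfrac14 d_{\bar{\mathcal{H}}}$ of $\tfrac12$; again swap in empirical quantities at cost $\tilde{\mathcal{O}}_\delta$.

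The main obstacle I anticipate is not any single inequality but making the binarized-class bookkeeping fully rigorous: one must verify that $\bar{\mathcal{H}}$ genuinely has VC dimension $O(c)$ given only that $\mathcal{H}$ has pseudo-dimension $c$ (this uses that pseudo-dimension of $\mathcal{H}$ equals the VC dimension of its subgraph class, and that thresholded differences of two such functions form a class of controlled complexity — a Sauer–Shelah / composition argument), and that the layer-cake representation of $\mathbb{E}[h]$ correctly connects the real-valued discrepancy $|\mathbf{D}_l(\mathcal{P}_{\text{VT}})-\mathbf{D}_l(\mathcal{P}_{\text{T}})|$ to the $\{0,1\}$-valued $d_{\bar{\mathcal{H}}}$ with the right constant $\tfrac12$. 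A secondary subtlety is the uniform-convergence rate for the ratio-type quantity $d_{\bar{\mathcal{H}}}$, which I would handle by the standard two-sample symmetrization argument rather than a direct union bound, and then fold its deviation into the common $\tilde{\mathcal{O}}_\delta$. Everything else — linearity, $d\le 1$ clipping, constant adjustments in the big-$\mathcal{O}$ — is routine.
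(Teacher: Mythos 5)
There is a genuine gap, and it sits at the heart of your argument: your key population inequality bounds the wrong quantity. You invoke the divergence to control $\big|\mathbf{D}_l(\mathcal{P}_{\text{VT}},F_\theta)-\mathbf{D}_l(\mathcal{P}_{\text{T}},F_\theta)\big|\le \tfrac12 d_{\bar{\mathcal{H}}}(\mathcal{P}_{\text{VT}},\mathcal{P}_{\text{T}})$, i.e.\ the \emph{difference} of the two expected distances, whereas the theorem's structure (the ``$1-\mathbf{D}_l(\mathcal{D}_{\text{T}},F_\theta)$'' in part (i) and the ``$\tfrac12\pm\tfrac14 d_{\bar{\mathcal{H}}}$'' in part (ii)) requires controlling how far the \emph{sum} $\mathbf{D}_l(\mathcal{P}_{\text{VT}},F_\theta)+\mathbf{D}_l(\mathcal{P}_{\text{T}},F_\theta)$ is from $1$. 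The paper gets exactly that by casting the problem as binary classification with a labeling function $h^{\star}$ equal to $1$ on $\mathcal{P}_{\text{VT}}$ and $0$ on $\mathcal{P}_{\text{T}}$: the risk of $h$ is then $1-\mathbf{D}_l(\mathcal{P}_{\text{VT}},F_\theta)$ on one distribution and $\mathbf{D}_l(\mathcal{P}_{\text{T}},F_\theta)$ on the other, and Lemma 1 of \citet{zhao2018adversarial} applied to this pair of risks yields $\big|1-\mathbf{D}_l(\mathcal{P}_{\text{VT}},F_\theta)-\mathbf{D}_l(\mathcal{P}_{\text{T}},F_\theta)\big|\le\tfrac12 d_{\bar{\mathcal{H}}}(\mathcal{P}_{\text{VT}},\mathcal{P}_{\text{T}})$, from which both parts follow after inserting the finite-sample bounds. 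Your attempted repair of part (i)---writing $\mathbf{D}_l(\mathcal{P}_{\text{T}},F_\theta)=1-(1-\mathbf{D}_l(\mathcal{P}_{\text{T}},F_\theta))\ge 1-\mathbf{D}_l(\mathcal{D}_{\text{T}},F_\theta)-\tilde{\mathcal{O}}_\delta$---is not licensed by concentration: it is equivalent to $\mathbf{D}_l(\mathcal{P}_{\text{T}},F_\theta)+\mathbf{D}_l(\mathcal{D}_{\text{T}},F_\theta)\ge 1-\tilde{\mathcal{O}}_\delta$, which essentially demands $\mathbf{D}_l(\mathcal{P}_{\text{T}},F_\theta)\gtrsim\tfrac12$ and is false in general (and contrary to the intended regime where vision-independent data yield small distances). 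Likewise, in part (ii) the claim that ``the average lies within $\tfrac14 d_{\bar{\mathcal{H}}}$ of $\tfrac12$'' does not follow from a bound on the difference: if both expectations equal $0$ the difference is $0$ yet the average is $0$, not near $\tfrac12$. Only the sum-close-to-one inequality, obtained through the complementary-label risks, gives that conclusion after using $\mathbf{D}_l(\mathcal{P}_{\text{M}},F_\theta)=\tfrac12\big(\mathbf{D}_l(\mathcal{P}_{\text{VT}},F_\theta)+\mathbf{D}_l(\mathcal{P}_{\text{T}},F_\theta)\big)$.

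The surrounding scaffolding in your plan is fine and matches the paper: uniform convergence of $\mathbf{D}_l(\mathcal{D}_{\star},F_\theta)$ to $\mathbf{D}_l(\mathcal{P}_{\star},F_\theta)$ at rate $\tilde{\mathcal{O}}_\delta$ via the pseudo-dimension, and concentration of $d_{\bar{\mathcal{H}}}(\mathcal{D}_{\text{VT}},\mathcal{D}_{\text{T}})$ around its population counterpart, are exactly the roles played by Lemmas 5 and 6 of \citet{zhao2018adversarial} in the paper's proof. But without replacing your population inequality by the complementary-risk version $\big|1-\mathbf{D}_l(\mathcal{P}_{\text{VT}},F_\theta)-\mathbf{D}_l(\mathcal{P}_{\text{T}},F_\theta)\big|\le\tfrac12 d_{\bar{\mathcal{H}}}$, neither part (i) nor part (ii) as stated can be reached, so the proposal as written does not prove the theorem.
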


\begin{tcolorbox}[mybox]
\noindent \textbf{Implication.} 
The first inequality (Ineq. \ref{thm:hdiv:eq}) reveals a relationship between two expected representation distances across $\mathcal{P}_{\text{VT}}$ and $\mathcal{D}_{\text{T}}\sim\mathcal{P}_{\text{T}}$ with $d_{\bar{\mathcal{H}}}(\mathcal{D}_{\text{VT}},\mathcal{D}_{\text{T}})$ as a bridge. This tells us that if we want to increase visual integration for an unknown data distribution that require visual reasoning ($\mathcal{P}_{\text{VT}}$), we can pursue a greater lower bound of it by decreasing $\mathbf{D}_{l}(\mathcal{D}_{\text{T}},F_{\theta})$ and $d_{\bar{H}}(\mathcal{D}_{\text{VT}},\mathcal{D}_{\text{T}})$ with empirical samples we have. Meanwhile, in a case where we encountered an unknown mixture distribution $\mathcal{P}_{\text{M}}$, the second inequality (Ineq. \ref{thm:hdiv_tside:eq}) tells us we can broaden the effective range of $\mathbf{D}_{l}$ on $\mathcal{P}_{\text{M}}$ by pursuing greater value of $d_{\bar{\mathcal{H}}}(\mathcal{D}_{\text{VT}},\mathcal{D}_{\text{T}})$.
\end{tcolorbox}
\section{Conclusion} \label{sec:disc}
In this work, we present a formal framework for understanding and quantifying the \emph{language prior} in LVLMs by contrasting the chain-of-embedding between visual and blind contexts. Through this framework, we identify the consistent existence of the \textbf{\emph{Visual Integration Point} (VIP)}, a specific layer at which the model begins to meaningfully incorporate visual context for task-solving beyond the shallow information gathering.
Building on this observation, we propose a new metric named \textbf{\emph{Total Visual Integration} (TVI)}, which estimates the degree of effective visual integration and therefore language prior.
We conduct comprehensive experiments across 9 LVLMs and 6 datasets, and the results demonstrate that our framework robustly works across models and tasks, providing consistent and interpretable signals about the presence and strength of language prior. 
Finally, we present some theorems for better understanding and utilization of our framework.
We hope that this work sheds light on the internal mechanisms of multimodal models and provides a foundation for diagnosing the language prior, ultimately guiding the development of reliable and responsible LVLMs.

\paragraph{Limitations.} We set the language prior to LVLMs as our sole target of analysis here, and developed our method based on the representation dynamics of LVLMs. However, there are many other potential biases and vulnerabilities originating from query distribution shifts in the wild, which may induce remarkable changes in the representation space and thus degradation of downstream performance~\citep{verma2024evaluating,oh2025understanding,oh2025visual,kim2025world}. Reliability of TVI-based language prior estimation should be further validated under realistic distribution shifts.

Besides, our method requires white-box access to the model’s internal hidden states and attention patterns. This restricts its applicability to open-weight models and excludes commercial APIs or closed-source systems. However, our framework is primarily designed for model analysis and interpretability research of white-box models, rather than serving as a versatile tool, aiming to shed light on how and when visual information is integrated during inference.
\newpage
\section*{Acknowledgement}
The authors would like to thank Boyi Li, Haobo Wang, Junbo Zhao, Samuel Yeh, and Jiatong Li for their insightful feedback and helpful discussions throughout the development of this work. Their suggestions greatly contributed to improving the quality of the draft. Changdae Oh, Seongheon Park, and Sharon Li are supported in part by the AFOSR Young Investigator Program under award number FA9550-23-1-0184, National Science Foundation under awards IIS-2237037 and IIS-2331669, Office of Naval Research under grant number N00014-23-1-2643, Schmidt Sciences Foundation, Open Philanthropy, Alfred P. Sloan Fellowship, and gifts from Google and Amazon. 

\bibliography{main}
\bibliographystyle{main}

\newpage
\appendix
\textsc{\huge {Appendix}}

\addcontentsline{toc}{section}{Appendix}

\startcontents[appendix]

\vspace{1.5em}
\textsc{\Large Contents}

\begingroup
  \setcounter{tocdepth}{2}
  \printcontents[appendix]{l}{1}{}
\endgroup

\begin{figure}[p]
    \centering
    \includegraphics[width=\linewidth]{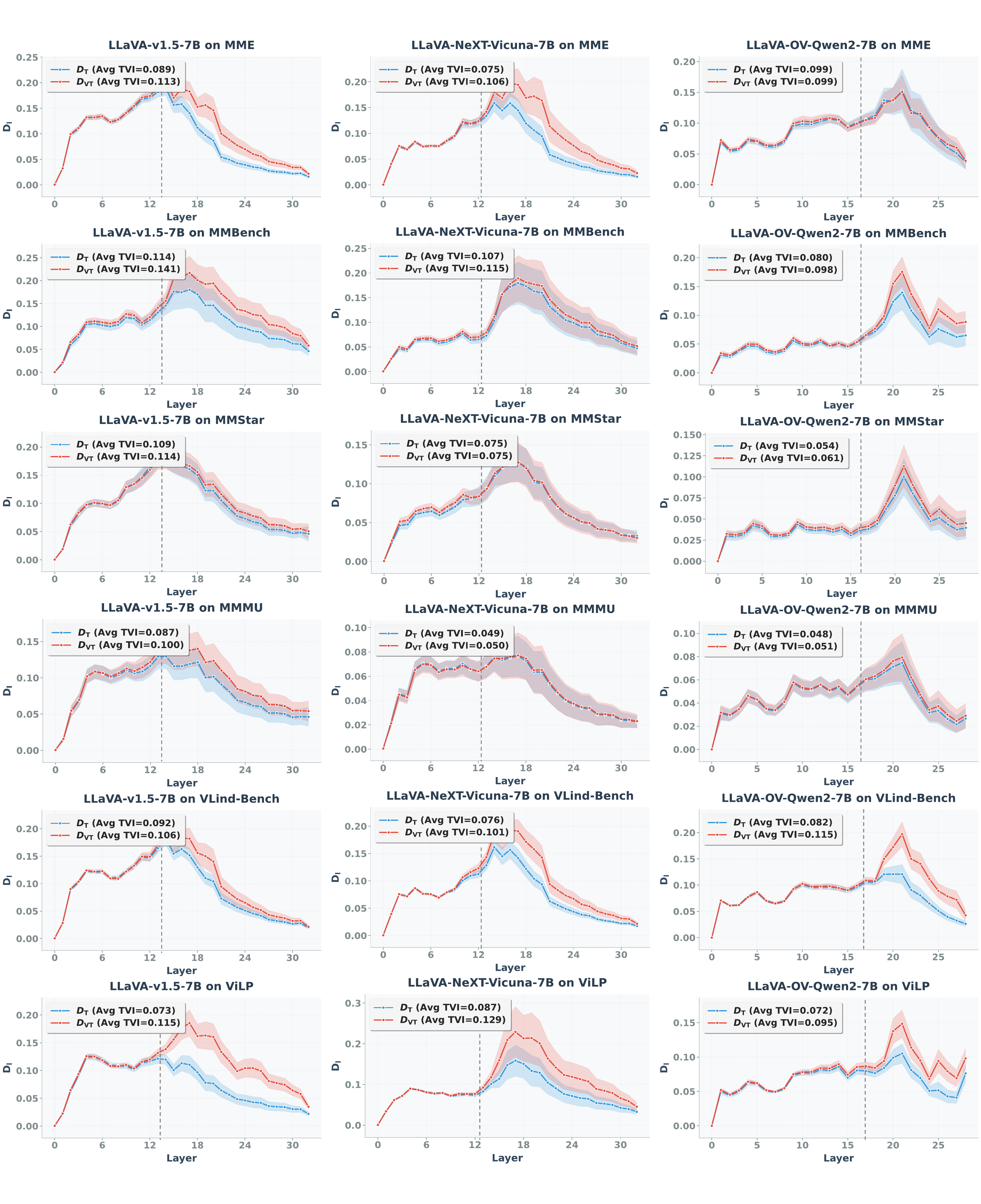}
    \caption{\textbf{Complete experimental results.} (Part 1) LLaVA-v1.5-7B, LLaVA-NeXT-Vicuna-7B, and LLaVA-OV-Qwen2-7B on six datasets.}
    \label{fig:complete_exp_1}
\end{figure}
\clearpage

\begin{figure}[p]
    \centering
    \includegraphics[width=\linewidth]{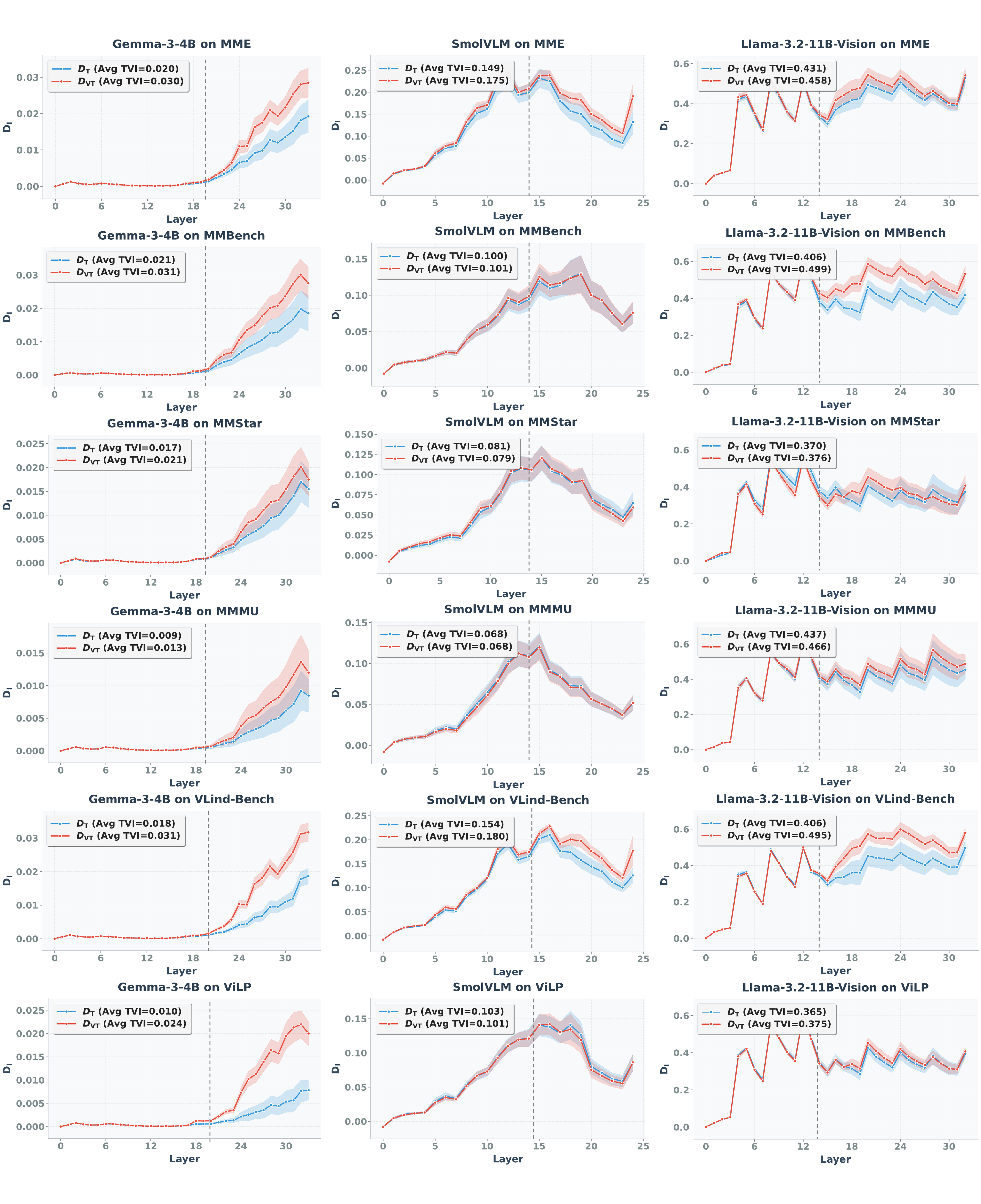}
    \caption{\textbf{Complete experimental results.} (Part 2) Gemma-3-4B, SmolVLM, and Llama-3.2-11B-Vision on six datasets.}
    \label{fig:complete_exp_2}
\end{figure}
\clearpage

\begin{figure}[p]
    \centering
    \includegraphics[width=\linewidth]{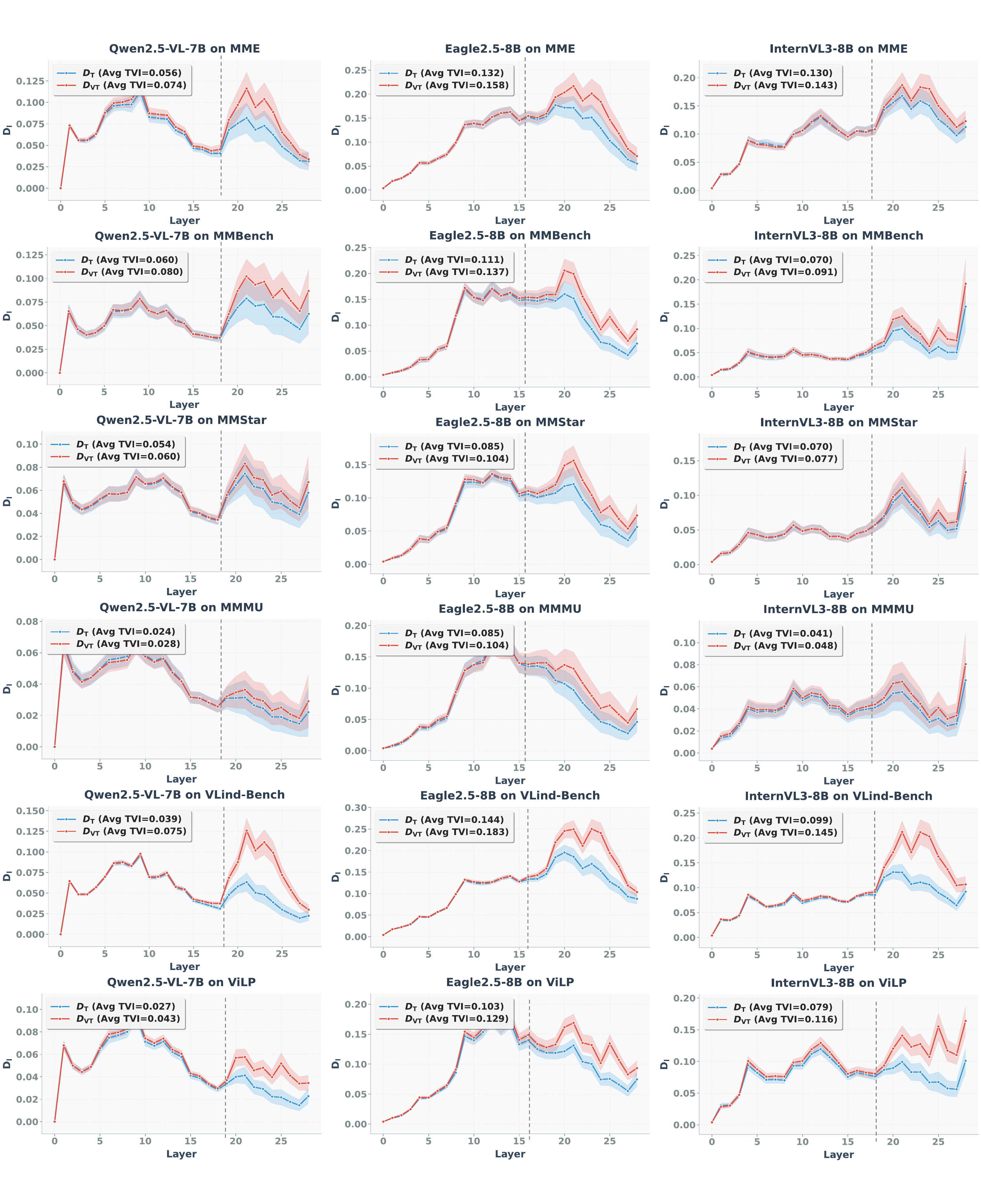}
    \caption{\textbf{Complete experimental results.} (Part 3) Qwen2.5-VL-7B, Eagle2.5-8B, and InternVL3-8B on six datasets.}
    \label{fig:complete_exp_3}
\end{figure}
\clearpage

\section{Related Work} \label{apdx:sec:relwork}
\paragraph{Visual perception in LVLMs.}
Most mainstream LVLMs~\citep{liu2023visual,liu2023improvedllava,liu2024llavanext,bai2025qwen2,dai2023instructblip} adopt a late-fusion architecture comprising three key components: a vision backbone, a language model that processes both image and text tokens, and a modality adapter that aligns visual representations with the language space. The visual understanding capabilities of these models largely depend on the perception quality of pre-trained vision encoders (\emph{e.g.}, CLIP~\citep{radford2021learning}, SigLIP~\citep{zhai2023sigmoid}) and the reasoning ability of large-scale language models.

Despite promising results on some multimodal benchmarks, this paradigm has been recently challenged due to its poor performance on vision-centric tasks, suggesting that these models often fail to truly see the image~\citep{tong_eyes_2024,tong2024cambrian,vo2025vision}. A growing body of work has sought to uncover how visual perception operates internally within LVLMs. For example, \citet{bi2025unveiling,jiang2025devils,neo2024towards} examine the layer-wise attention patterns and identify distinct phases of visual integration, typically emerging in the mid-to-late layers. \citet{venhoff2025visual} utilize pre-trained sparse autoencoders (SAEs) as analytical tools to show that visual representations gradually align with language representations over depth, converging in the later layers. Complementarily, \citet{fu2025hidden} apply probing techniques and argue that language decoders in existing LVLMs struggle to effectively leverage the visual features produced by their vision backbones. Their findings suggest that the bottleneck lies not in the availability of visual information but in feature misalignment.

\paragraph{Language prior in LVLMs.}
One of the most prominent limitations of current LVLMs is their tendency to overly rely on language priors, often generating plausible outputs without grounding in the visual context. This behavior—commonly referred to as the language prior problem—has drawn increasing attention. Recent studies attempt to evaluate this phenomenon by designing datasets that stress-test visual grounding. For example, \citet{lee2025vlind} and \citet{luo2025probing} construct datasets with counterfactual visual inputs to assess whether models can disentangle visual signals from misleading linguistic cues. Similarly, \citet{deng2025words} test modality conflict scenarios to evaluate the model's preference between text and image inputs.

While these works help reveal the presence of language priors, they offer limited insight into the underlying causes. Most current understandings of language prior are based on two widely adopted assumptions: (1) it manifests as high similarity between outputs with and without visual input~\citep{chen2025prioritizing,xie2024v}, and (2) it arises due to insufficient attention allocated to visual tokens~\citep{liu2025more}. Building on these assumptions, several works propose methods to mitigate language priors—such as contrastive decoding~\citep{favero2024multi} or inference-time attention reallocation~\citep{liu2024paying}. Others, like \citet{chen2025prioritizing} and \citet{xie2024v}, explore training-time interventions that penalize outputs overly aligned with the model's default language predictions.

\section{Discussion on Visual Integration Point}
\label{apdx:sec:vip}

\subsection{Note on VIP determination} \label{apdx:sec:vip:ass}
In Eq.~\ref{hyp:vip:eq}, we defined the VIP $l^{*}$ based on the pre-$l^{*}$ and post-$l^{*}$ representation divergences $\mathbf{D}_{l}(\mathcal{P}_{\text{VT}},F_{\theta})-\mathbf{D}_{l}(\mathcal{P}_{\text{T}},F_{\theta})$ where the pre-$l^{*}$ chain-of-embeddings shows nearly zero representation divergence whereas the post-$l^{*}$ chain-of-embeddings exhibits an effectively large representation divergence defined by positive $\tau$. In practice, we can not access the population distribution $\mathcal{P}_{\text{VT}}$ and $\mathcal{P}_{\text{T}}$, therefore we can not compute the truth expected representation distance $\mathbf{D}_{l}(\mathcal{P}_{\star},F_{\theta})$. What we do in practice is estimate that expected representation distance with finite samples $\mathcal{D}_{\text{VT}}$ and $\mathcal{D}_{\text{T}}$, and see the evolution of representation distance gap to manually pick the VIP $l^{*}$. We observed that this empirical estimator of representation distance (Eq.~\ref{eq:erd}) works well as a measure for determining VIP in general, yet there can be some cases where the fitness of the estimator is bad, e.g., Figure~\ref{fig:baseline}. However, even in that case, the point $l^{*}$, where the empirical representation divergence exceeds a positive constant for all subsequent layers, consistently emerges, and the TVI (Eq.~\ref{def:tvie:eq}) is calculated based on that $l^{*}$ becomes a strong indicator of LP.

\subsection{Algorithmic Estimation on Visual Integration Point} \label{apdx:sec:vip:estimation}
As discussed in \S\ref{sec:method:vip}, identifying the VIP provides valuable insights into when visual information begins to influence answer decoding. In most cases, we rely on empirical observation of the representation distance curves to manually determine $l^*$, which proves to be straightforward and interpretable across a wide range of models.
However, in situations where manual inspection is impractical --- e.g., for large-scale model comparisons or automated pipelines --- it may be desirable to estimate the VIP in a data-driven, automatic way. To this end, we introduce an algorithmic rule that can automatically estimate the VIP given a model and a dataset. While not required for our core analysis, this estimation method serves as a practical tool in settings where manual identification of $l^*$ is infeasible.

Below is an estimation strategy based on the test statistic we discussed in Eq~\ref{thm:testat:eq:apdx} of Lemma~\ref{thm:testat:apdx} by specifying arbitrary significance levels that a user prefers.
To be specific, given a pooled sample standard deviation $\hat{\sigma}_{}=\sqrt{ \frac{ \sigma^{2}_{l,{\text{T}}}} {|\mathcal{D}_{\text{T}}|}+\frac{\sigma^{2}_{l,{\text{VT}}}}{ |\mathcal{D}_{\text{VT}}|} }$, we can define the estimated VIP as follows,
\begin{equation}
    \hat{\text{VIP}}(\mathcal{D},F_\theta)=\arg\min_{l\in \mathcal{L}\backslash L}\;\frac{\mathbf{D}_{l}(\mathcal{D}_{\text{VT}},F_{\theta})-\mathbf{D}_{l}(\mathcal{D}_{\text{T}},F_{\theta})}{\hat{\sigma}_{l}}\geq\frac{\beta}{l-1}\sum_{k<l}\frac{\mathbf{D}_{k}(\mathcal{D}_{\text{VT}},F_{\theta})-\mathbf{D}_{k}(\mathcal{D}_{\text{T}},F_{\theta})}{\hat{\sigma}_{k}}\text{,}
\end{equation}
where $\sigma^{2}_{l,\text{T}}=\frac{\sum_{z\in\mathcal{D}_{T}}(d(z^{l}_{\text{vis}},z^{l}_{\text{blind}})-\mathbf{D}_{l}(\mathcal{D}_{\text{T}},F_{\theta}))^{2}}{|\mathcal{D}_{\text{T}}|-1}$ is a sample variance on $\mathcal{D}_{\text{T}}$ and $\sigma^{2}_{l,\text{VT}}$ is similarly defined. Intuitively, the quantity on the left-hand side denotes a deviation-normalized representation distance in the layer $l$, whereas the right-hand side means the historical average of those distances with a weighting coefficient $\beta$, which was set to 2.0 in our case. 
The above algorithm can be applied to any given dataset, and we conduct comprehensive experiments to show its robust performance across different models and datasets in Table~\ref{apdx:tab:auto_vip_comp} and Figure~\ref{apdx:fig:vip_detection}.

\begin{table}[t]
\centering
\caption{\textbf{Comparison of VIP detection methods and their effectiveness (Spearman correlations between the resulting TVI and downstream correctness).}}
\vspace{-0.5em}
\small
\begin{tabular}{@{}cccc@{}}
    \toprule
    \textbf{Model (Layer \#)} & \textbf{VIP Detection} 
    & \textbf{MMBench - VIP (\(\rho\))} 
    & \textbf{ViLP - VIP (\(\rho\))} \\
    \midrule

    \texttt{Qwen2.5-VL-7B} (28) 
    & Manual 
    & \makecell{18 \\ (0.6335)}
    & \makecell{18 \\ (0.6335)} \\

    \texttt{Qwen2.5-VL-7B} (28) 
    & Variance-based 
    & \makecell{18 \\ (0.6335)}
    & \makecell{19 \\ (0.6336)} \\
    \midrule

    \texttt{Gemma-3-4B} (30) 
    & Manual 
    & \makecell{20 \\ (0.7970)}
    & \makecell{20 \\ (0.7970)} \\

    \texttt{Gemma-3-4B} (30) 
    & Variance-based 
    & \makecell{16 \\ (0.7973)}
    & \makecell{16 \\ (0.7973)} \\
    \midrule

    \texttt{InternVL3-8B} (32) 
    & Manual 
    & \makecell{16 \\ (0.5709)}
    & \makecell{16 \\ (0.5709)} \\

    \texttt{InternVL3-8B} (32) 
    & Variance-based 
    & \makecell{17 \\ (0.5749)}
    & \makecell{20 \\ (0.5949)} \\

    \bottomrule
\end{tabular}
\label{apdx:tab:auto_vip_comp}
\end{table}

\begin{figure}[t]
    \vspace{-0.5em}
    \centering
    \includegraphics[width=0.95\linewidth]{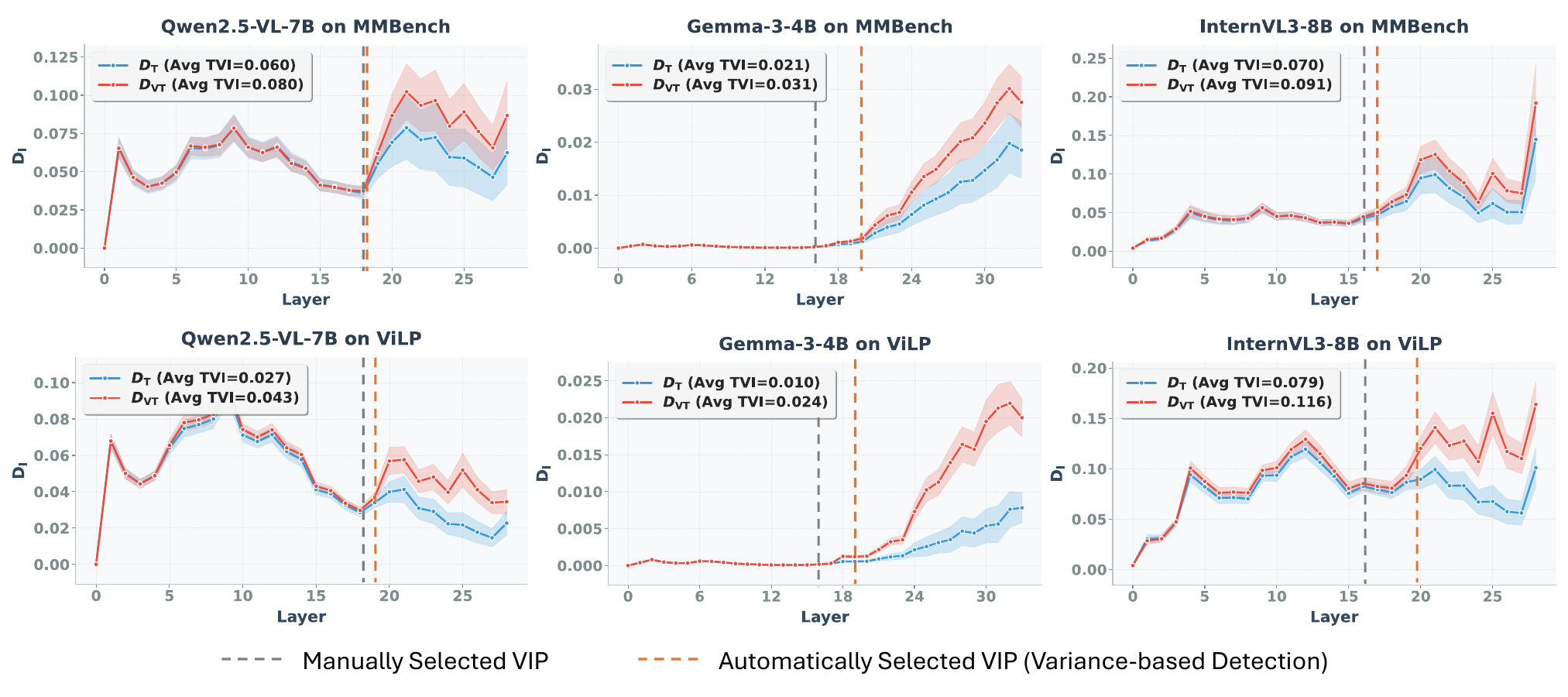}
    \caption{\textbf{Comparison between the manually selected VIPs and automatically selected VIPs.}}
    \label{apdx:fig:vip_detection}
    \vspace{-0.75em}
\end{figure}
From Table~\ref{apdx:tab:auto_vip_comp} and Figure~\ref{apdx:fig:vip_detection}, we can see that VIP estimated by the proposed algorithm are quite close to the result of the manually selected one and clearly mark where the two curves start to diverge (based on the divergence plots), and the Spearman correlations between TVIs computed by those estimated VIPs and the downstream prediction correctness are robust across different datasets.

\section{Implementation Details}
\label{apdx:sec:implementation}

\paragraph{Models.} 
We evaluate our framework on 10 publicly available LVLMs, covering a diverse range of architectures and training paradigms. For all models, we use the official instruction-tuned checkpoints available on Hugging Face\footnote{\url{https://huggingface.co/Qwen/Qwen2.5-VL-7B-Instruct}\\\url{https://huggingface.co/llava-hf/llava-1.5-7b-hf}\\\url{https://huggingface.co/llava-hf/llava-v1.6-vicuna-7b-hf}\\\url{https://huggingface.co/llava-hf/llava-onevision-qwen2-7b-ov-hf}\\\url{https://huggingface.co/OpenGVLab/InternVL3-8B-hf}\\\url{https://huggingface.co/nvidia/Eagle2.5-8B}\\\url{https://huggingface.co/google/gemma-3-4b-it}\\\url{https://huggingface.co/google/gemma-3-12b-it}\\\url{https://huggingface.co/google/gemma-3-27b-it}\\\url{https://huggingface.co/meta-llama/Llama-3.2-11B-Vision-Instruct}\\\url{https://huggingface.co/HuggingFaceTB/SmolVLM-Instruct}\\\url{https://huggingface.co/Salesforce/instructblip-vicuna-7b}}. To ensure consistent comparison across models, we set the generation temperature to 0.

\paragraph{Datasets.}
Our evaluation spans 6 benchmark datasets, each consisting of either binary (`Yes/No') or multiple-choice questions. This design ensures that the hidden state of the final token—used for representation distance calculations—is closely tied to the model's reasoning process.
For MMMU, we use the validation set and filter out samples that involve more than one image or open-ended output to ensure consistency in the evaluation setting. Also, when constructing the prompt, we do not use the provided explanations for fair comparison. For ViLP, we consistently select \texttt{image3} and \texttt{answer3} to curate our (counterfactual) VQA pair. For VLind-Bench, we convert each annotated counterfactual statement into a binary `Yes/No' question. To perform this transformation, we use the advanced language model \texttt{GPT-4o} with the following prompt:

\begin{tcolorbox}[width=\linewidth, colback=gray!5!white, colframe=gray!75!black]
Generate a question based on the counterfactual information in the given statement. The question should be answered by yes.\\
Here are some examples:\\
Statement: The Statue of Liberty is holding a sword instead of a torch. Question: Is the Statue of Liberty holding a sword?\\ Statement: The Sydney Opera House is illustrated as an underwater aquarium, with fish swimming around its structures. Question: Is the Sydney Opera House underwater?\\ Statement: The Leaning Tower of Pisa is perfectly vertical in the image, without any tilt. Question: Is the Leaning Tower of Pisa perfectly vertical?\\ Now generate a question for the following statement: \{statement\}
\end{tcolorbox}

\begin{figure}[!h]
    \centering
    \includegraphics[width=\linewidth]{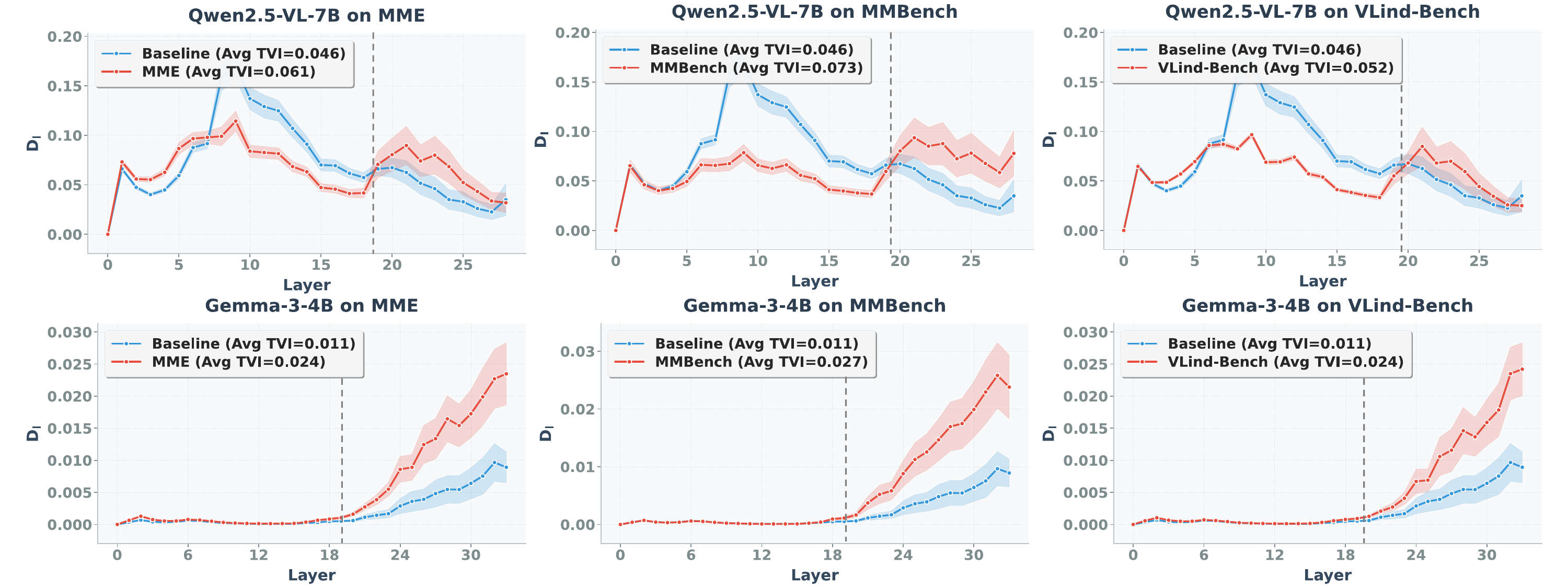}
    \caption{\textbf{Experimental results under controlled settings.} We use a synthetic baseline constructed from CommonsenseQA questions paired with irrelevant images as $\mathcal{D}_\text{T}$ (vision-independent), while standard VQA benchmarks (MME, MMBench, and VLind-Bench) are used as $\mathcal{D}_\text{VT}$ (vision-dependent).}
    \label{fig:baseline}
\end{figure}
The models are instructed to directly generate the answer under a zero-shot setting, without involving any reasoning steps. Additional statistics of each dataset and corresponding splits are provided in Table~\ref{apdx:tab:stats}. 

To further validate the reliability of the agreement-based separation introduced in \S\ref{sec:method:vip}, and to demonstrate that our analysis can generalize to scenarios with known visual dependencies, we construct a controlled baseline dataset. Specifically, we take questions from CommonsenseQA~\citep{talmor2018commonsenseqa}, which are inherently language-only, and pair each question with a randomly selected, irrelevant image from COCO 2017-val~\citep{lin2014microsoft}, forming a synthetic VQA setting that does not require visual input. We treat this as our vision-independent group $\mathcal{D}_{T}$. For the vision-dependent group $\mathcal{D}_{VT}$, we use samples from standard VQA benchmarks such as MME, MMBench, and VLind-Bench, which typically require more grounding in visual content.
As shown in Figure~\ref{fig:baseline}, the average TVI is significantly lower for the baseline $\mathcal{D}_{T}$ compared to the general VQA datasets, confirming that the model does not extract meaningful information from irrelevant visual input. In contrast, even in the presence of strong language priors, the model still benefits from image content in $\mathcal{D}_{VT}$. It is also worth noting that there is a reverse trend before VIP for \texttt{Qwen2.5-VL-7B}, indicating representation distance before VIP is not associated with the actual meaningful visual integration during decoding. These findings are consistent with our previous results and further support the effectiveness of our proposed separation framework. Nonetheless, to ensure better control over data attributes such as format and context length, and to reveal clearer trends, we continue to use the agreement-based separation strategy in the main text.

\begin{table}[h]
    \centering
    \caption{\textbf{Dataset statistics.} \emph{M} stands for multiple-choice and \emph{B} stands for binary-choice (Yes/No). \texttt{Qwen2.5-VL-7B} is used as an example here.}
    \small
    \begin{tabular}{@{}lcccccc@{}}
    \toprule
    Statistics & MME & MMBench & MMStar & MMMU & VLind-Bench & ViLP \\ 
    \midrule
    Question Type & B & M & M & M & B & M \\ 
    $|\mathcal{D}|$ & 2374 & 4377 & 1500 & 805 & 418 & 300 \\ 
    $|\mathcal{D}_{VT}|$ & 546 & 2782 & 1057 & 446 &  144 & 177 \\ 
    $|\mathcal{D}_{T}|$ & 1828 & 1595 & 443 & 359 & 274 & 123 \\ 
    $|\mathcal{D}_{VT}|/|\mathcal{D}_{T}|$ & 0.30 & 1.74 & 2.39 & 1.24 & 0.53 & 1.44 \\
    \bottomrule
    \end{tabular}
    \label{apdx:tab:stats}
\end{table}

\paragraph{Metrics.}
All TVI values reported in our experiments are computed based on empirically determined Visual Integration Points (VIPs) specific to each model. The following VIPs are used: \texttt{Qwen2.5-VL-7B} (18), \texttt{InternVL3-8B} (16), \texttt{Gemma-3-4B} (20), \texttt{Gemma-3-12B} (26), \texttt{Gemma-3-27B} (35), \texttt{LLaVA-v1.5-7B} (9), \texttt{Eagle2.5-8B} (15), \texttt{Llama-3.2-11B-Vision} (12), \texttt{LLaVA-NeXT-Vicuna-7B} (12), \texttt{LLaVA-OV-Qwen2-7B} (15) and \texttt{SmolVLM} (15)\footnote{It should be noted that these manually selected VIPs are not necessarily optimal; however, they already achieve strong and robust effectiveness and are sufficient for analytical purpose. The truly optimal VIP is likely to lie in their vicinity.}. We also provide an automatic method for estimating VIP in Appendix~\ref{apdx:sec:vip:estimation} for potential practical usage. 
Representation distances are computed using the hidden states corresponding to the last generated token. The metrics introduced in \S\ref{sec:exp} are computed as follows:
\begin{equation}
    \text{Visual Attention} = \frac{1}{L H} \sum_{l=1}^{L} \sum_{h=1}^{H} \alpha^{(l,h)}, \quad \text{Output Divergence} = d(Z^L_\text{vis},Z^L_\text{blind})
\end{equation}
where $\alpha^{(l,h)}$ denotes the total attention from the final generated token to all preceding visual tokens in head $h$ at layer $l$, and $Z^L$ represents the hidden state at the final layer.

\section{Full Results of the Main Experiment} \label{apdx:sec:additional_exp}

We provide the complete experimental results on 9 LVLMs and 6 datasets in Figure~\ref{fig:complete_exp_1},~\ref{fig:complete_exp_2},~\ref{fig:complete_exp_3}. Across all models and datasets, a consistent existence of the VIP can be observed. However, for some models, such as SmolVLM, the divergence between $\mathcal{D}_{VT}$ and $\mathcal{D}_{T}$ is less pronounced, likely due to the model's limited capacity and thus less promising visual integration.

\begin{table}[t]
\centering
\caption{\textbf{Summary of evaluated models and their architectural specifications.}}
\small
\begin{tabular}{@{}lccc@{}}
\toprule
\textbf{Model} & \textbf{Fusion} & \textbf{\# Layers} & \textbf{Hidden Size} \\
\midrule
\texttt{Qwen2.5-VL-7B/32B/72B}      & MLP         & 28 / 64 / 80 & 3584 / 5120 / 8192 \\
\texttt{Gemma-3-4B/12B/27B}         & Q-Former    & 34 / 48 / 62 & 2560 / 3840 / 5376 \\
\texttt{InternVL3-8B}               & MLP         & 28           & 3584 \\
\texttt{LLaVA-v1.5-7B}              & MLP         & 32           & 4096 \\
\texttt{Eagle2.5-8B}                & MLP         & 28           & 3584 \\
\texttt{Llama-3.2-11B-Vision}       & X-Attention & 40           & 4096 \\
\texttt{LLaVA-NeXT-Vicuna-7B}       & MLP         & 32           & 4096 \\
\texttt{LLaVA-OV-Qwen2-7B}          & MLP         & 28           & 3584 \\
\texttt{SmolVLM}                    & MLP         & 24           & 2048 \\
\texttt{InstructBLIP-Vicuna-7B}     & Q-Former    & 32           & 4096 \\
\bottomrule
\end{tabular}
\label{apdx:tab:model_summary}
\end{table}
\begin{figure}
    \vspace{-0.5em}
    \centering
    \includegraphics[width=1.05\linewidth]{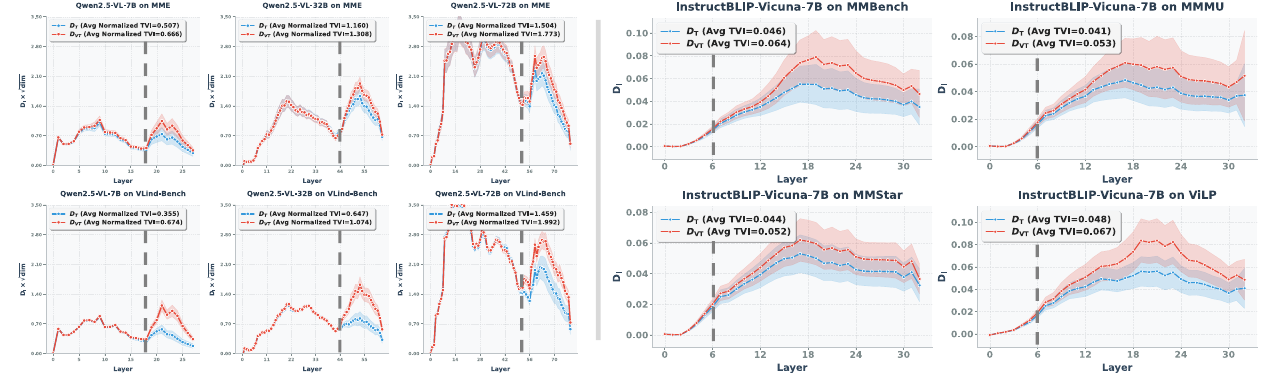}
    \caption{\textbf{Extended experiments across models of varying scales and fusion architectures.}}
    \label{apdx:fig:more_models}
    \vspace{-0.75em}
\end{figure}
\textbf{Additional architectures \& scales.} We further validate our findings on models with diverse architectural designs, including \texttt{Llama-3.2-11B-Vision} with cross-attention–based multimodal fusion and \texttt{InstructionBLIP-Vicuna-7B} with a Q-Former–style fusion mechanism, as well as on larger-scale models such as \texttt{Qwen2.5-VL-32B} and \texttt{Qwen2.5-VL-72B}. As shown in Figure~\ref{apdx:fig:more_models}, the results consistently corroborate our conclusions across both architectural variants and model scales. Comprehensive statistics for all evaluated models are provided in Table~\ref{apdx:tab:model_summary}.

\section{Further Analysis}
\label{apdx:sec:analysis}
\begin{figure}[!h]
    \centering
    \includegraphics[width=\linewidth]{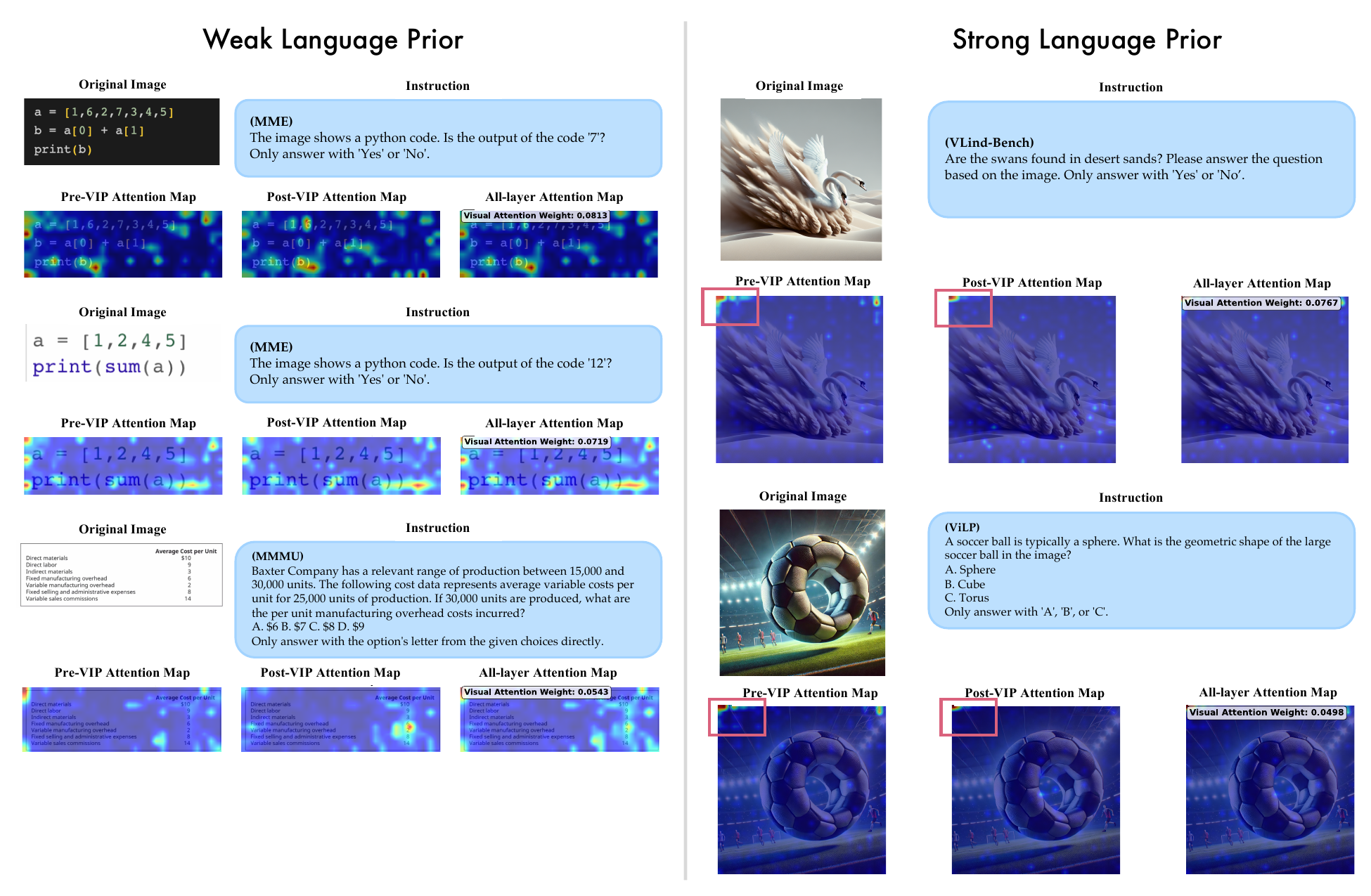}
    \caption{\textbf{Visualization of visual attention maps under weak and strong language priors.}}
    \label{fig:attention}
\end{figure}

\paragraph{Analysis on the limitations of attention-based and output-based LP analysis.} 
First, for attention-based methods, we argue that a higher visual attention weight does not necessarily imply better visual grounding. As shown in Figure~\ref{fig:attention}, under weak language priors, the model is able to correctly attend to the key areas that are semantically related to the given instruction. However, under strong language priors, we observe a pathological attention pattern in which the model's attention becomes abnormally concentrated in a limited region of the image. We refer to this phenomenon as an attention sink. In such cases, although the aggregated visual attention weight appears high, it does not reflect meaningful visual processing. Instead, the model is effectively bypassing genuine visual understanding by fixating on irrelevant or static regions, thereby undermining the utility of attention-based metrics as reliable indicators of visual integration.

\begin{figure}[h]
    \centering
    \includegraphics[width=0.9\textwidth]{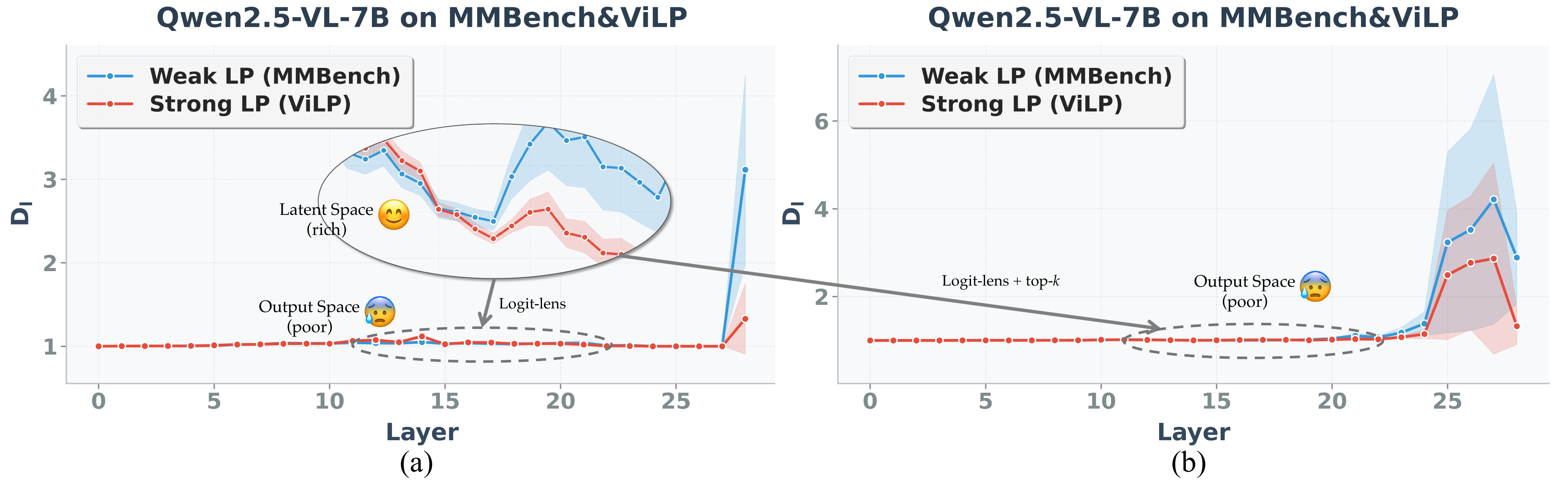}
    \caption{\textbf{Layer-wise representation distances in latent space vs. output space.} We apply the logit-lens to project hidden states at each layer into the output space. In (a), distances are computed over the entire output vector, while in (b), they are restricted to the top-$k$ token positions corresponding to candidate answer options.}
    \label{fig:dist_metrics}
\end{figure}
To better understand why output-based representations are less effective in capturing language prior behavior, we visualize how representation distances vary across the latent and output spaces. As shown in Figure~\ref{fig:dist_metrics}, the projection from the latent space to the output token space tends to obscure semantic distinctions that are otherwise indicative of the model's underlying behavior—such as whether it is performing effective visual grounding or defaulting to language priors.
This observation aligns with our earlier argument: surface-level outputs alone may not faithfully reflect the internal decision-making process of LVLMs. Instead, meaningful behavioral signals often reside in the deeper latent representations, emphasizing the importance of analyzing internal dynamics rather than relying solely on output-level comparisons.

\paragraph{Ablation on aggregation strategies for TVI calculation.}
In our main experiments, we adopt a simple aggregation strategy for computing TVI by averaging the representation distances across all post-VIP layers. To assess whether more sophisticated aggregation may improve the metric, we additionally experiment with a standard-deviation-based weighted TVI, defined as:
\begin{align}  \label{def:tvie:eq}
    \text{TVI}(l^*;x,F_{\theta})=\frac{1}{L-l^* + 1}\sum_{l=l^*}^{L}\big[\sigma_l\cdot
        d(z^{l}_{\text{vis}},z^{l}_{\text{blind}})
    \big]\text{,}
\end{align}
\begin{wraptable}{r}{0.55\textwidth}
\centering
\caption{\textbf{Comparison of aggregation methods for computing TVI.}}
\small
\resizebox{\linewidth}{!}{%
\begin{tabular}{@{}lccc@{}}
\toprule
\textbf{Model} & \textbf{Aggregation} & \textbf{VLind} & \textbf{ViLP} \\ 
\midrule
\texttt{Qwen2.5-VL-7B} & Simple Average        & 0.7155 & 0.6335 \\
\texttt{Qwen2.5-VL-7B} & Std Reweighting  & 0.7164 & 0.6348 \\
\midrule
\texttt{InternVL3-8B}  & Simple Average        & 0.6727 & 0.5709 \\
\texttt{InternVL3-8B}  & Std Reweighting  & 0.6739 & 0.5723 \\
\bottomrule
\end{tabular}}
\label{apdx:tab:tvi_aggregation}
\vspace{-1.0em}
\end{wraptable}
where $\sigma_l$ denotes the standard deviation of representation distances at layer $l$. This weighting scheme emphasizes layers whose distance distributions exhibit higher variability, i.e., layers that are more sample-specific and potentially more informative, while down-weighting contributions that may arise from less discriminative layers. As shown in Table~\ref{apdx:tab:tvi_aggregation}, this reweighting provides a slight improvement in effectiveness over the simple averaging strategy across both VLind and ViLP. However, the gains are modest and come at the cost of additional computation. These findings suggest that our original simple averaging approach already serves as a strong and robust summary statistic, while the weighted variant may offer incremental refinement in specialized scenarios.

\paragraph{Analysis on instruction-level perturbation.}
\begin{wrapfigure}{r}{0.625\textwidth}
    \centering
    \includegraphics[width=1.0\linewidth]{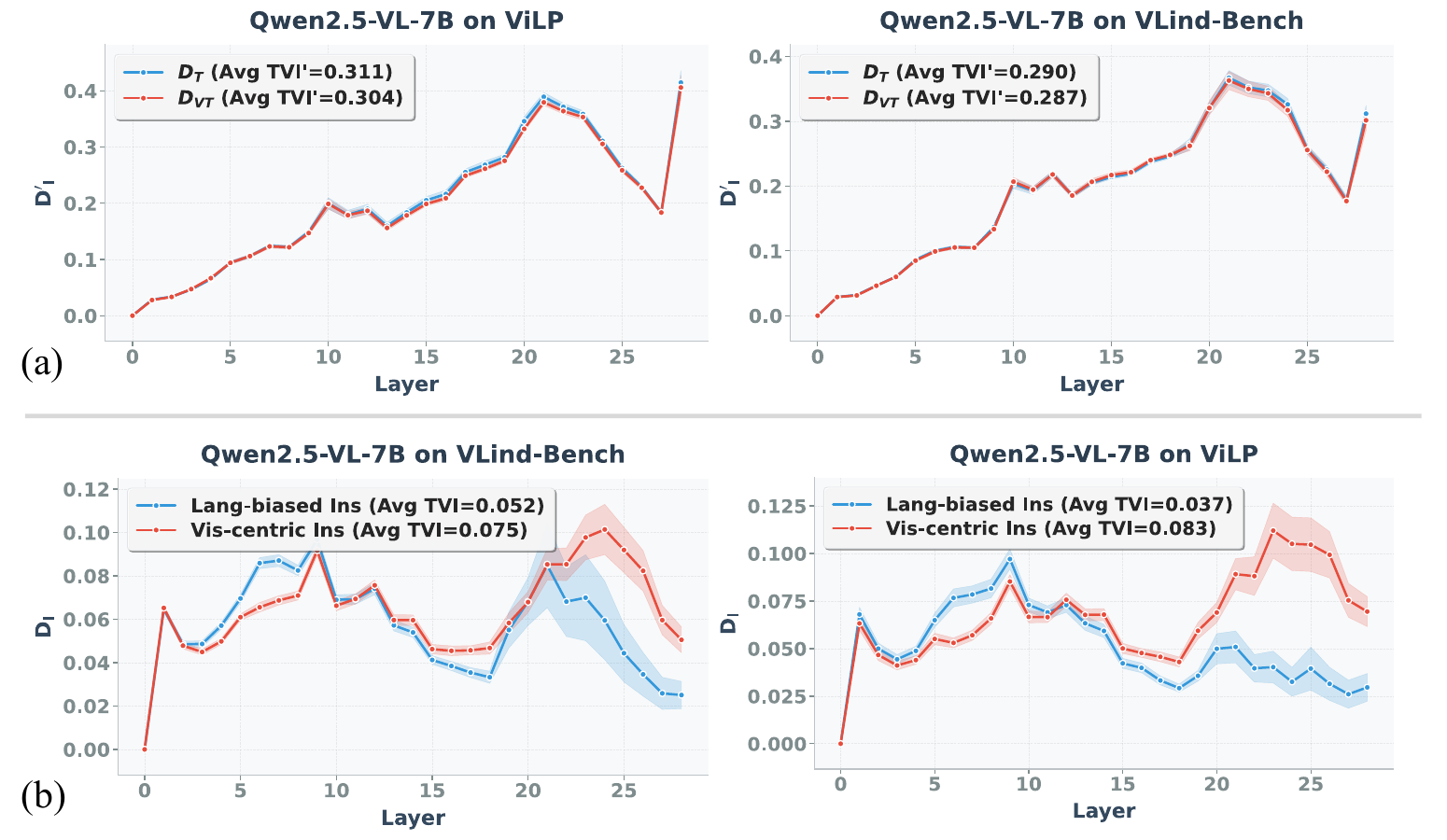}
    \caption{\textbf{Impacts of instruction-level perturbation.} (a) Representation distances obtained by contrasting chain-of-embedding sequences produced \textit{with and without textual instructions.} (b) TVI scores under instructions with \textit{different styles}}
    \label{apdx:fig:image_only}
\end{wrapfigure}
To strengthen our study of the contrasting chain-of-embedding, we compare embedding chains produced by image-text inputs versus image-only inputs (Figure~\ref{apdx:fig:image_only} (a)). As expected, the two curves exhibit only negligible differences, likely due to the limited semantic content of image-only inputs. This phenomenon can be attributed to two main factors: (1) The hidden states from image-only inputs do not reveal anything about the model's answer prediction, considering that the instructions are not even provided. The resulting representations thus are not directly comparable in the same behavioral space. (2) Most current LVLMs are not trained to handle image-only inputs for question answering, which makes the model's behavior on image-only inputs unpredictable. 
Nonetheless, even under these constraints, we still observe that the revised TVI, which now specifically captures the contribution of textual content to the decoding process, is marginally higher for vision-independent samples than for vision-dependent ones. This further corroborates the robustness of our analysis framework and the consistency of our conclusions.

To further isolate the influence of the textual component in multimodal inputs, we investigate how different instruction styles affect model behavior. Specifically, we contrast misleading instructions (those originating from VLind-Bench or ViLP that elicit strong LP) with generic, vision-centric prompts such as ``Describe the image in detail.'' As results in Figure~\ref{apdx:fig:image_only} (b) show, TVI under these vision-centric instructions is significantly higher than under misleading ones. This confirms that TVI is sensitive to the model's bias toward language priors under different instruction regimes.

\paragraph{Analysis on VIP \& TVI across different training stages.}
To better understand how VIP and TVI evolve during model training, we analyze checkpoints from multiple stages of LLaVA-v1.5 visual instruction tuning. As shown in Figure~\ref{apdx:fig:llava_training} and Figure~\ref{apdx:fig:llava_training_tvi}, our evaluation reveals two key observations. First, the VIP position remains remarkably stable across training stages (around the 12th layer). This suggests that the mechanism governing where visual information is integrated is largely established during pretraining and persists throughout subsequent finetuning. In other words, VIP appears to reflect an intrinsic architectural property rather than a behavior shaped by instruction tuning. Second, TVI exhibits a clear upward trend as training progresses, indicating that the degree to which the model effectively incorporates visual information improves gradually during instruction tuning. These findings align with our previous conclusion that VIP is a model-specific property as well as our expectation that visual instruction tuning incrementally enhances the model's multimodal fusion capability.

\begin{figure}[t]
    \centering
    \includegraphics[width=\linewidth]{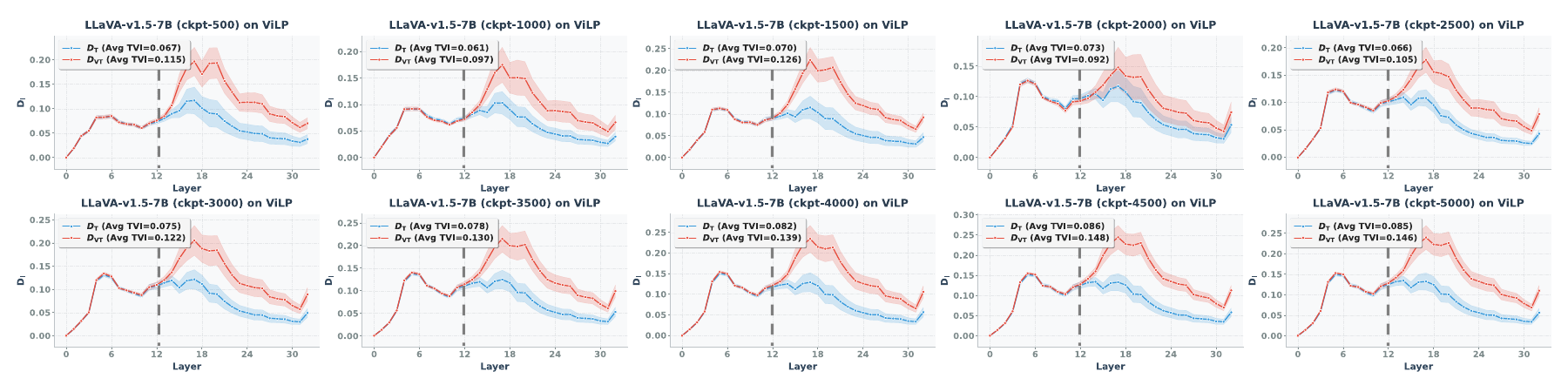}
    \caption{\textbf{Evaluation results across different stages of LLaVA-v1.5's visual instruction tuning.}}
    \label{apdx:fig:llava_training}
\end{figure}
\begin{figure}
    \centering
    \includegraphics[width=0.5\linewidth]{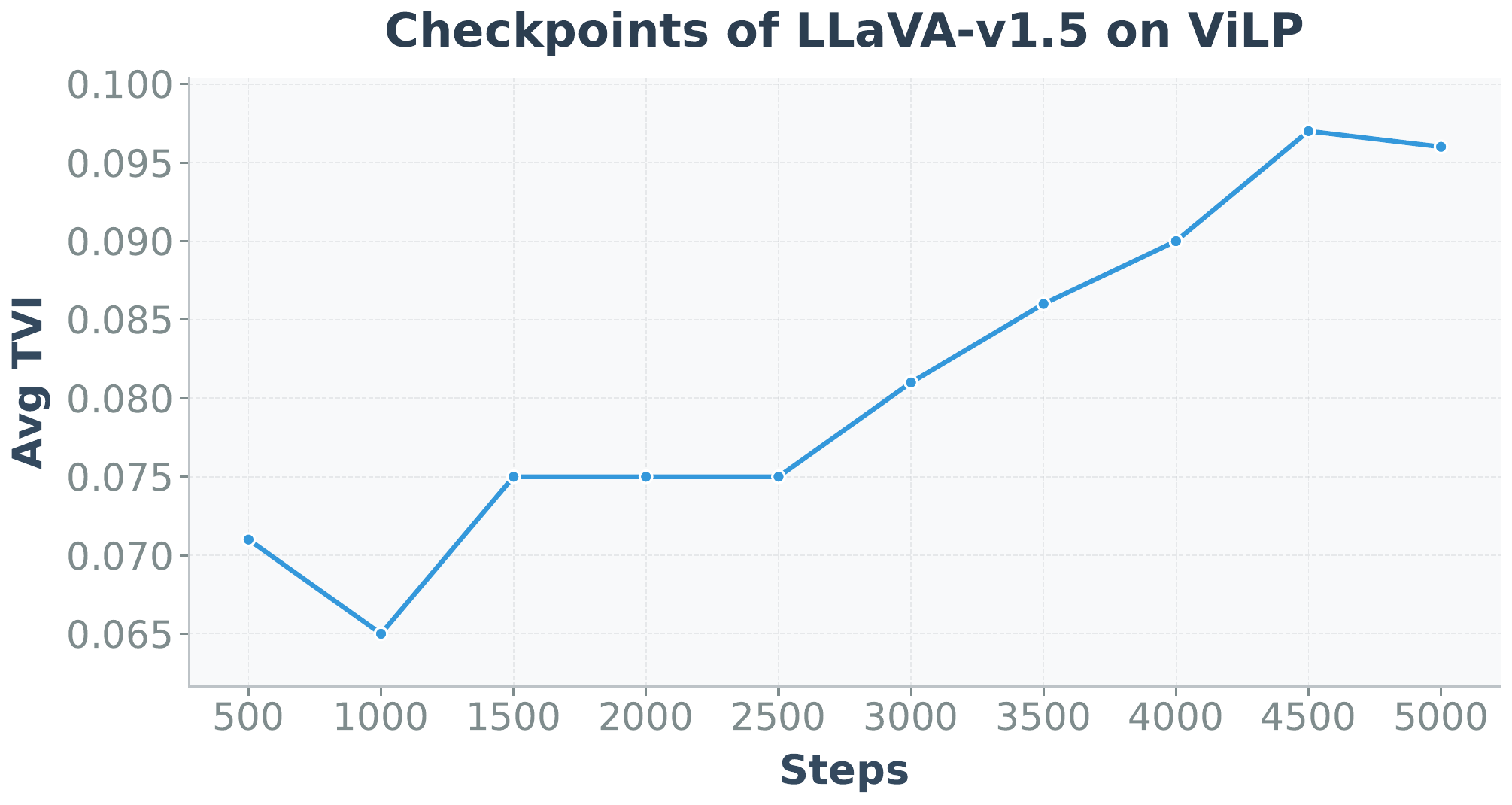}
    \caption{\textbf{Average TVI on ViLP across different stages of visual instruction tuning.}}
    \label{apdx:fig:llava_training_tvi}
\end{figure}

\paragraph{Analysis on when TVI fails.} 
Althrough TVI shows strong correlation with answer correctness as shown in Table~\ref{tab:metrics_comp}, there are still some non-neglectable portion on failure cases that TVI can not properly predict the model's answer quality. Conceptually, TVI is designed to capture the degree of effective visual integration during decoding. While higher TVI generally implies that the generated answer is more grounded in visual input and thus more likely to be correct, there are natural regimes where this relationship weakens, for example: (1) Some questions are not strongly visually demanding; the model can answer them correctly with relatively little visual integration. In such cases, the answer may be correct even when TVI is small, leading to ``false negatives'' from TVI's perspective.
(2) For very challenging visual questions (e.g., fine-grained recognition, subtle spatial reasoning, or interpreting small/occluded objects) the model may still fail even after substantial effort to integrate visual information (high TVI). This yields ``false positives'' in terms of TVI–correctness alignment. These factors introduce inherent noise into any evaluation based on TVI vs. correctness.

To further characterize these patterns, we visualize the TVI distribution for ViLP samples that Qwen2.5-VL-7B answers correctly and perform a focused analysis on those with unexpectedly low TVI. As shown in Figure~\ref{apdx:fig:failure_case}, while the majority of correctly answered samples exhibit relatively high TVI, a small subset attains low TVI despite being answered correctly. Upon inspection, many of these cases can indeed be solved without relying on visual input, either due to knowledge leakage or lucky guessing. This suggests that such questions are intrinsically less visually demanding, and the low TVI observed in these cases does not constitute a failure of the metric but instead reflects the fundamental properties of the task itself.

\section{Details on Theoretical Analysis and Proofs} \label{apdx:sec:thm}

\subsection{Justification and Interpretation on Representation Divergence} \label{apdx:sec:thm:proof2}
We first show that our empirical estimate for the difference in the expected representation distances (Eq. \ref{hyp:vip:eq}) can be viewed as a two-sample test statistic with asymptotic normality in Lemma \ref{thm:testat:apdx}.

\begin{lemma}\label{thm:testat:apdx}
    Let $X=(X_v,X_t)\in\mathcal{X}$ be a random variable sampled from $\mathcal{P}_{\text{VT}}$ or $\mathcal{P}_{\text{T}}$, and denote $\mathcal{D}_{\text{VT}}\sim\mathcal{P}_{\text{VT}}$ and $\mathcal{D}_{\text{T}}\sim\mathcal{P}_{\text{T}}$ as empirical distributions with $N$ and $M$ i.i.d. samples, respectively. Given a stack of LVLM layers $f_l:\mathcal{X}\rightarrow\mathcal{Z}$ from $F_{\theta}$ and a distance metric $d:\mathcal{Z}\times\mathcal{Z}\rightarrow\mathbb{R}$ with a finite second moment, the difference in the expected representation distance estimates $\mathbf{D}_{l}(\cdot,\cdot)$ between $\mathcal{D}_{\text{VT}}$ and $\mathcal{D}_{\text{T}}$ is a two-sample test statistic with asymptotic normality, that is, 
    \begin{equation}
        \mathbf{D}_{l}(\mathcal{D}_{\text{TV}},F_{\theta})-\mathbf{D}_{l}(\mathcal{D}_{\text{T}},F_{\theta}) \overset{\underset{\mathrm{approx}}{}}{\sim} \mathcal{N}(\mu_{\text{T}}-\mu_{\text{VT}},\frac{\sigma^{2}_{\text{T}}}{M}+\frac{\sigma^{2}_{\text{VT}}}{N}) \quad \text{as} \;\; N,M\rightarrow \infty,
        \label{thm:testat:eq:apdx}
    \end{equation}
    where $\mu_{\text{VT}}=\mathbf{D}_{l}(\mathcal{P}_{\text{VT}},F_{\theta})$, $\mu_{\text{T}}=\mathbf{D}_{l}(\mathcal{P}_{\text{T}},F_{\theta})$ and $\sigma^{2}_{\text{VT}}=\text{Var}_{\mathcal{P}_{\text{VT}}}[d(f_l(X_v,X_t),f_{l}(X_t))]<\infty$, $\sigma^{2}_{\text{T}}=\text{Var}_{\mathcal{P}_{\text{T}}}[d(f_l(X_v,X_t),f_{l}(X_t))]<\infty$.
\end{lemma}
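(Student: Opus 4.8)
The statement is a direct application of the classical two-sample central limit theorem, so the plan is to (i) verify that the two quantities $\mathbf{D}_l(\mathcal{D}_{\text{VT}},F_{\theta})$ and $\mathbf{D}_l(\mathcal{D}_{\text{T}},F_{\theta})$ are themselves sample means of i.i.d.\ bounded-variance random variables, (ii) invoke the univariate CLT on each separately, and (iii) combine them using independence of the two samples. Concretely, for a fixed layer $l$ define the scalar random variable $W := d(f_l(X_v,X_t),f_l(X_t))$, which is a deterministic function of $X=(X_v,X_t)$. Under $\mathcal{P}_{\text{VT}}$ this has mean $\mu_{\text{VT}} = \mathbf{D}_l(\mathcal{P}_{\text{VT}},F_{\theta})$ and variance $\sigma^2_{\text{VT}} = \Var_{\mathcal{P}_{\text{VT}}}[W]$, and the hypothesis that $d$ has a finite second moment guarantees $\sigma^2_{\text{VT}}<\infty$ (and likewise $\sigma^2_{\text{T}}<\infty$ under $\mathcal{P}_{\text{T}}$). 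Since $\mathcal{D}_{\text{VT}}$ consists of $N$ i.i.d.\ draws, $\mathbf{D}_l(\mathcal{D}_{\text{VT}},F_{\theta}) = \frac{1}{N}\sum_{i=1}^N W_i$ is an empirical mean, so the Lindeberg--L\'evy CLT yields $\sqrt{N}\big(\mathbf{D}_l(\mathcal{D}_{\text{VT}},F_{\theta})-\mu_{\text{VT}}\big)\xrightarrow{d}\mathcal{N}(0,\sigma^2_{\text{VT}})$, i.e.\ asymptotically $\mathbf{D}_l(\mathcal{D}_{\text{VT}},F_{\theta})\sim\mathcal{N}(\mu_{\text{VT}},\sigma^2_{\text{VT}}/N)$. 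The identical argument applied to $\mathcal{D}_{\text{T}}$ gives $\mathbf{D}_l(\mathcal{D}_{\text{T}},F_{\theta})\sim\mathcal{N}(\mu_{\text{T}},\sigma^2_{\text{T}}/M)$ as $M\to\infty$.

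The final step is to take the difference. Because the samples $\mathcal{D}_{\text{VT}}$ and $\mathcal{D}_{\text{T}}$ are drawn independently of each other, the two asymptotically normal statistics are independent, so their difference is asymptotically normal with mean $\mu_{\text{VT}}-\mu_{\text{T}}$ and variance $\sigma^2_{\text{VT}}/N+\sigma^2_{\text{T}}/M$. Rearranging the sign to match the paper's convention gives
\begin{equation}
\mathbf{D}_{l}(\mathcal{D}_{\text{VT}},F_{\theta})-\mathbf{D}_{l}(\mathcal{D}_{\text{T}},F_{\theta}) \overset{\text{approx}}{\sim} \mathcal{N}\Big(\mu_{\text{VT}}-\mu_{\text{T}},\ \tfrac{\sigma^2_{\text{VT}}}{N}+\tfrac{\sigma^2_{\text{T}}}{M}\Big),
\end{equation}
which is exactly \eqref{thm:testat:eq:apdx} after writing the mean as $-(\mu_{\text{T}}-\mu_{\text{VT}})$; note the stated form $\mu_{\text{T}}-\mu_{\text{VT}}$ in the paper corresponds to $\mathbf{D}_{l}(\mathcal{D}_{\text{T}})-\mathbf{D}_{l}(\mathcal{D}_{\text{VT}})$, so I would either match their ordering or flag the sign convention explicitly.

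There is essentially no deep obstacle here; the only subtlety worth spelling out is that $W$ is a genuine (measurable) random variable whose moments are controlled purely by the moment assumption on $d$ composed with the fixed, deterministic map $f_l$—so no regularity of $f_l$ beyond measurability is needed, and the bounded-second-moment hypothesis does all the work. If one wants the convergence jointly as $N,M\to\infty$ at possibly different rates, I would phrase it via Slutsky/Cram\'er--Wold: stack $(\mathbf{D}_l(\mathcal{D}_{\text{VT}}),\mathbf{D}_l(\mathcal{D}_{\text{T}}))$, note the joint limit is a product of independent normals by independence of the two samples, and apply the continuous map $(a,b)\mapsto a-b$. The mildly delicate bookkeeping is just ensuring the variance scaling $\sigma^2_{\text{VT}}/N+\sigma^2_{\text{T}}/M$ is tracked correctly when $N\ne M$, which the two separate CLT applications handle automatically.
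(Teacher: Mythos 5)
Your proposal is correct and follows essentially the same route as the paper's own proof: write each $\mathbf{D}_l(\mathcal{D}_\star,F_{\theta})$ as an i.i.d.\ sample mean of the scalar $d(f_l(X_v,X_t),f_l(X_t))$, apply the CLT to each, and combine via independence of the two samples (the paper likewise remarks that dependence would only add covariance terms). Your flag of the sign convention is well taken—the mean of $\mathbf{D}_l(\mathcal{D}_{\text{VT}},F_{\theta})-\mathbf{D}_l(\mathcal{D}_{\text{T}},F_{\theta})$ should indeed be $\mu_{\text{VT}}-\mu_{\text{T}}$, and the paper's stated $\mu_{\text{T}}-\mu_{\text{VT}}$ appears to be a sign typo carried through its proof.
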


\begin{figure}[t]
    \vspace{-0.5em}
    \centering
    \includegraphics[width=0.95\linewidth]{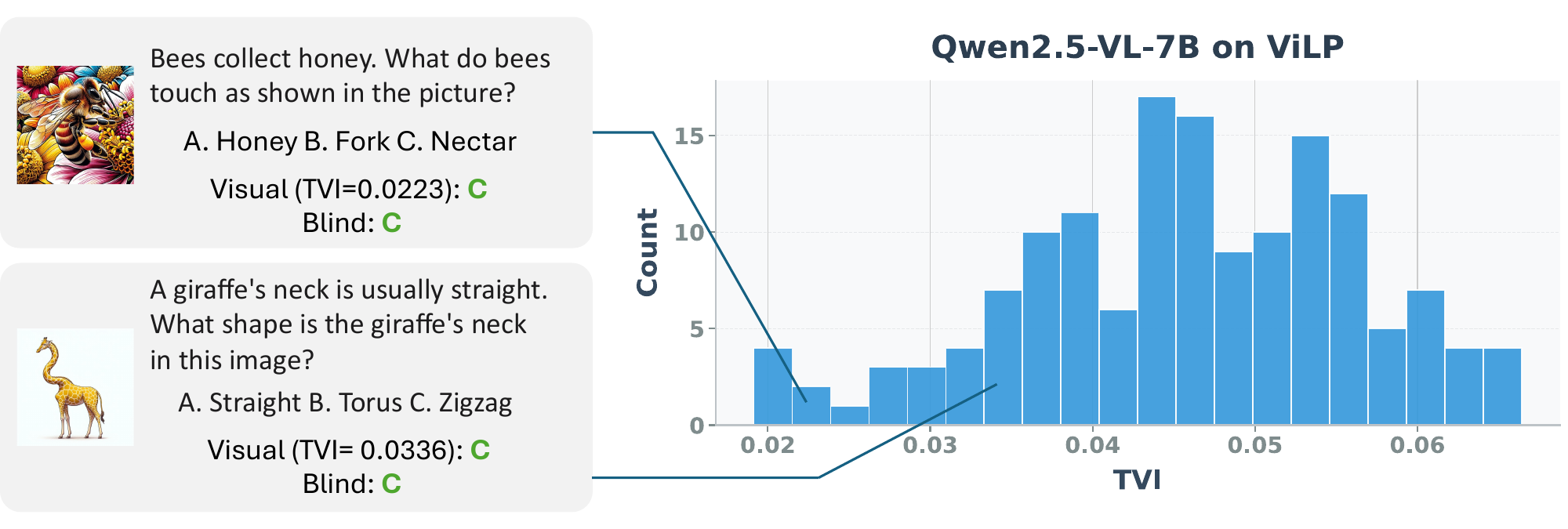}
    \caption{\textbf{Visualization of TVI distribution and case studies on TVI failure cases.}}
    \label{apdx:fig:failure_case}
    \vspace{-0.75em}
\end{figure}

\begin{proof}
Let $\mathcal{I}_{\text{VT}}$ and $\mathcal{J}_{\text{T}}$ be the index sets of $N$ and $M$ i.i.d. samples from $\mathcal{P}_{\text{VT}}$ and $\mathcal{P}_{\text{T}}$, respectively. If samples from $\mathcal{P}_{\text{VT}}$ and $\mathcal{P}_{\text{T}}$ are independent, with finite second moments, we have,
\begin{align}
    &\mathbf{D}_l(\mathcal{D}_{\text{VT}},F_{\theta})-\mathbf{D}_l(\mathcal{D}_{\text{T}},F_{\theta}) \nonumber \\
    &=\frac{\sum_{i\in \mathcal{I}_{\text{VT}}}[d(f_l(x^i_v,x^i_t),f_l(x^i_t))]}{N}-\frac{\sum_{j\in\mathcal{J}_{\text{T}}}[d(f_l(x^j_v,x^j_t),f_l(x^j_t))]}{M} \\
    &\sim \mathcal{N}(\mu_{\text{T}}-\mu_{\text{VT}},\frac{\sigma^{2}_{\text{T}}}{M}+\frac{\sigma^{2}_{\text{VT}}}{N}) \tag{by Central Limit Theorem as $N,M\rightarrow\infty$}
\end{align}

Note that, even though the samples from $\mathcal{P}_{\text{VT}}$ and $\mathcal{P}_{\text{T}}$ are not independent, the asymptotic normality still holds by considering covariance terms between the two distributions.
\end{proof}

Now, in Lemma \ref{thm:nll:apdx}, we provide a new interpretation of our representation distance measure between CoE by casting the distance measurement as a density estimation problem.
\begin{lemma} \label{thm:nll:apdx}
    Let $X=(X_v,X_t)\in\mathcal{X}$ be a random variable sampled from $\mathcal{P}_{\text{VT}}$ or $\mathcal{P}_{\text{T}}$, and $f_l:\mathcal{X}\rightarrow\mathcal{Z}$ be a stack of LVLM layers. For $\mathcal{P}_{\text{T}}$, define a density estimator $\hat{p}_{\text{T}}(Z^l):=\mathcal{N}(Z^{l};f_{l}(X_t),I)$. Given a squared $l_2$ distance $d(Z_1,Z_2):=\frac{1}{2}||Z_1-Z_2||_2^{2}$, the representation distance $d(f_{l}(X_v,X_t),f_{l}(X_t))$ is the negative log-likelihood estimate of $Z^l$ from $\mathcal{P}_{\text{T}}$, denoted as $\hat{p}_{\text{T}}(Z^l)$, up to an additive constant. That is,
    \begin{equation}
        d(f_{l}(X_v,X_t),f_{l}(X_t))\triangleq -\log \hat{p}_{\text{T}}(Z^l)+\log C,
        \label{thm:nll:eq:apdx}
    \end{equation}
    where $C=(2\pi)^{-\frac{d_z}{2}}$ is a normalizing constant for the $d_z$-dimensional unit-variance Gaussian.
\end{lemma}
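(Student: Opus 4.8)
The plan is to split the statement into two independent parts: a purely algebraic identity relating $d$ to a Gaussian negative log-likelihood, and a propriety claim about the scoring rule this identity induces.

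First I would write out the density explicitly, $\hat p_{\text{T}}(Z^l) = \mathcal{N}(Z^l; f_l(X_t), I) = (2\pi)^{-d_z/2}\exp\!\big(-\tfrac12\|Z^l - f_l(X_t)\|_2^2\big)$, and substitute $Z^l = f_l(X_v,X_t)$. Taking $-\log$ of both sides and collecting terms yields $-\log \hat p_{\text{T}}(Z^l) = \tfrac{d_z}{2}\log(2\pi) + \tfrac12\|f_l(X_v,X_t)-f_l(X_t)\|_2^2 = -\log C + d(f_l(X_v,X_t),f_l(X_t))$ with $C = (2\pi)^{-d_z/2}$, so that rearranging gives exactly Eq.~\ref{thm:nll:eq:apdx}. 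This step is mechanical: it is just matching the quadratic term of the Gaussian exponent with the squared $\ell_2$ distance and the log-normalizer with $-\log C$.

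Second, for the strictly-proper-scoring-rule claim, I would recall that the logarithmic score $S(q,z) := \log q(z)$ is strictly proper over the class of densities: for any true density $p$ and any competing density $q$, $\mathbb{E}_{z\sim p}[S(p,z) - S(q,z)] = D_{\mathrm{KL}}(p\,\|\,q) \ge 0$ by Gibbs' inequality, with equality iff $q = p$ almost everywhere. Since the identity above gives $d(f_l(X_v,X_t),f_l(X_t)) = -S(\hat p_{\text{T}}, Z^l) + \log C$ where $\log C$ is a fixed constant depending neither on the reported density $\hat p_{\text{T}}$ nor on the realized embedding $Z^l$, the negatively-oriented score $d$ inherits strict propriety: minimizing $\mathbb{E}[d]$ over the choice of predictive mean is equivalent to maximizing the expected log score, which uniquely recovers the true generating density. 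Restricting the forecaster's report to the unit-covariance Gaussian location family $\{\mathcal{N}(\mu, I) : \mu \in \mathbb{R}^{d_z}\}$ preserves strictness, because $\mu \mapsto \mathcal{N}(\mu, I)$ is injective, so distinct mean parameters yield distinct expected scores.

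The part requiring the most care—rather than a genuine obstacle—is making precise what the ``report'' is: the model commits to the predictive density $\hat p_{\text{T}}$ whose sole free parameter is its mean $f_l(X_t)$, with covariance fixed at $I$. One must state that this parametrization is identifiable (it is) so that the word ``strictly'' is justified, and track the sign convention so that $d$, being a loss, corresponds to a negatively-oriented proper scoring rule; everything else is bookkeeping. As an alternative to the log-score argument, one could instead invoke that squared error is the Bregman divergence generated by $\|\cdot\|_2^2$ and hence a strictly proper loss for the mean functional, but the log-score route is cleaner here since it makes the constant $\log C$ transparent.
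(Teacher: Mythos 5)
Your proposal is correct and follows essentially the same route as the paper's proof: expand the unit-covariance Gaussian, take the logarithm to match the quadratic term with $d$ and the normalizer with $\log C$, then invoke strict propriety of the logarithmic score (which the paper simply cites from Gneiting--Raftery) together with invariance under an additive constant. Your extra remarks on Gibbs' inequality and the injectivity of the mean parametrization only make explicit what the paper leaves to the citation.
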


\begin{proof}
It is easy to check,
\begin{align}
    \hat{p}_{\text{T}}(Z^l)&=\mathcal{N}(Z^{l};Z^l_{\text{blind}},I) \\
    \hat{p}_{\text{T}}(Z^l)&=C\cdot\exp(-\frac{||Z^l-Z^l_{\text{blind}})||_{2}^{2}}{2}) \\
    \log \hat{p}_{\text{T}}(Z^l)&=\log C -\frac{||Z^l-Z^l_{\text{blind}}||_{2}^{2}}{2} \\
    \log \hat{p}_{\text{T}}(Z^l)&=\log C -d(Z^l,Z^l_{\text{blind}})
\end{align}
where $Z_{\text{blind}}=f_l(X_t)$.
\end{proof}

In other words, the distance between the original representation $Z^{l}=f_l(X_v,X_t)$ and the blind one $Z^{l}_{\text{blind}}=f_l(X_t)$ can be expressed as a probability density estimate of $Z^l$ given $\mathcal{N}(\cdot;Z^{l}_\text{blind},I)$, which has a mean $Z^l_{\text{blind}}$ with the isotropic variance, as our estimator. On top of this new framing, \emph{i.e.}, distance measurement as a density estimation problem, we further provide an information-theoretic interpretation of the difference in expected representation distance in Theorem \ref{thm:info:apdx}.
\begin{theorem}[Restatement of Theorem \ref{thm:info}]  \label{thm:info:apdx}
    Let $X=(X_v,X_t)\in\mathcal{X}$ be a random variable from $\mathcal{P}_{\text{VT}}$ or $\mathcal{P}_{\text{T}}$, and  $f_l:\mathcal{X}\rightarrow\mathcal{Z}$ be a layer stack from an LVLM $F_{\theta}$. For $\mathcal{P}_{\text{T}}$, define a density estimator $\hat{p}_{\text{T}}(Z^l):=\mathcal{N}(Z^l;f_{l}(X_t),I)$, and denote $p_{\text{VT}}(Z^l)$ (resp. $p_{\text{T}}(Z^l)$) as the population distribution on $Z^l=f_{l}(X)$ derived from $\mathcal{P}_{\text{VT}}$ (resp. $\mathcal{P}_{\text{T}}$). Then, given $d(Z_1,Z_2):=\frac{1}{2}||Z_1-Z_2||_2^{2}$, the difference in the expected representation distances between $\mathcal{P}_{\text{VT}}$ and $\mathcal{P}_{\text{T}}$, \emph{i.e.}, $\mathbf{D}_l(\mathcal{P}_{\text{VT}},F_{\theta})-\mathbf{D}_l(\mathcal{P}_{\text{T}},F_{\theta})$, can be expressed as follows,
    \begin{equation}
        \text{KL}\big(p_{\text{VT}}(Z^l)||\hat{p}_{\text{T}}(Z^l)\big)-\text{KL}\big(p_{\text{T}}(Z^l)||\hat{p}_{\text{T}}(Z^l)\big) + \bar{\mathbf{H}},
        \label{thm:info:eq:apdx}
    \end{equation}
    where $\bar{\mathbf{H}}$ is a constant $H\big(p_{\text{VT}}(Z^l)\big)-H\big(p_{\text{T}}(Z^l)\big)$, and $\text{KL}(\cdot||\cdot)$ denotes the KL divergence.
\end{theorem}

\begin{proof}

\begin{align}
    &\mathbf{D}_l(\mathcal{P}_{\text{VT}},F_{\theta})-\mathbf{D}_l(\mathcal{P}_{\text{T}},F_{\theta}) \nonumber \\
    &=\mathbb{E}_{\mathcal{P}_{\text{VT}}}[d(f_l(X_v,X_t),f_l(X_t))]-\mathbb{E}_{\mathcal{P}_{\text{T}}}[d(f_l(X_v,X_t),f_l(X_t))] \\
    &=\mathbb{E}_{p_{\text{VT}}(Z^l)}[-\log \hat{p}_{\text{T}}(Z^l)+\log C]-\mathbb{E}_{p_{\text{T}}(Z^l)}[-\log \hat{p}_{\text{T}}(Z^l)+\log C] \label{thm:info:proof:eq2} \\
    &=\mathbb{E}_{p_{\text{VT}}(Z^l)}[-\log \hat{p}_{\text{T}}(Z^l)]-\mathbb{E}_{p_{\text{T}}(Z^l)}[-\log \hat{p}_{\text{T}}(Z^l)] \\
    &=H\big(p_{\text{VT}}(Z^l), \hat{p}_{\text{T}}(Z^l)\big)-H\big(p_{\text{T}}(Z^l), \hat{p}_{\text{T}}(Z^l)\big) \\
    &=\big[\text{KL}\big(p_{\text{VT}}(Z^l)||\hat{p}_{\text{T}}(Z^l)\big)+H\big(p_{\text{VT}}(Z^l)\big)\big]-\big[\text{KL}\big(p_{\text{T}}(Z^l)||\hat{p}_{\text{T}}(Z^l)]\big)+H\big(p_{\text{T}}(Z^l)\big)\big]
\end{align}
where $H(\cdot)$ and $H(\cdot,\cdot)$ denote entropy and cross-entropy, respectively. Here, Eq. \ref{thm:info:proof:eq2} holds by Lemma \ref{thm:nll:apdx}, and the remaining equality is trivial by the definitions of $d()$ and information theoretic measures.
\end{proof}
This simple theorem gives us a new interpretation on the measure of representation divergence, $\mathbf{D}_l(\mathcal{P}_{\text{VT}},F_{\theta})-\mathbf{D}_l(\mathcal{P}_{\text{T}},F_{\theta})$: the amount of expected excess surprisal when we assume that the sample representation follows blind representation-centered normal distribution compared to the true population distribution $\mathcal{P}_{\text{VT}}$, compensated by estimation quality $\text{KL}\big(p_{\text{T}}(Z^l)||\hat{p}_{\text{T}}(Z^l)\big)$.

\subsection{Notations and Problem Setup for Theorem \ref{thm:err_bound}} \label{apdx:sec:thm:setup1}
We recast the problem of measuring the representation distance $d(Z_{\text{vis}},Z_{\text{blind}})$ as a binary classification task, where we want to classify the sample $(Z_{\text{vis}},Z_{\text{blind}})$ into 1 if it originates from the distribution $\mathcal{P}_{\text{VT}}$ while 0 for the samples from $\mathcal{P}_{\text{T}}$. 

To be specific, let $\mathcal{X}$, $\mathcal{Z}$, and $\mathcal{Y}$ denote input, LVLM representation, and output, respectively. We have a multimodal input query $X=(X_v,X_t)\in\mathcal{X}$, a stack of LVLM layers $f_l:\mathcal{X}\rightarrow\mathcal{Z}$, and a distance metric $d:\mathcal{Z}\times\mathcal{Z}\rightarrow[0,1]$. With that, we define a hypothesis $h=d(f_l(X_v,X_t),f_l(X_t)):\mathcal{X}\rightarrow[0,1]$ as a real value function to measure the relative likelihood that the input $X$ is sampled from $\mathcal{P}_{\text{VT}}$ rather than $\mathcal{P}_{\text{T}}$, and we also define the labeling function $h^{\star}:\mathcal{X}\rightarrow\{0,1\}$ that maps the input into its ground-truth membership, \emph{i.e.}, 1 if it's from $\mathcal{P}_{\text{VT}}$ and 0 if it's from $\mathcal{P}_{\text{T}}$. Then, we formulate an expected error (a.k.a. \textit{risk}) of a hypothesis $h$ w.r.t. the labeling function $h^{\star}$ on a distribution $\mathcal{P}$ as follows: $\varepsilon_{\mathcal{P}}(h,h^{\star}):=\mathbb{E}_{X\sim\mathcal{P}}[|h(X)-h^{\star}(X)|]$.

Besides, in Def. \ref{def:hdiv}, we introduce a measure of discrepancy between two distributions, $\mathcal{H}$-divergence, which has been widely adopted in domain adaptation literature~\citep{NIPS2006_b1b0432c,ben2010theory,ganin2016domain,zhao2018adversarial}, and also VLM fine-tuning regimes~\citep{oh2024towards}.
\begin{definition}[$\mathcal{H}$-divergence, \citep{NIPS2006_b1b0432c,ben2010theory}] \label{def:hdiv}
    Let $\mathcal{P}$ and $\mathcal{P}'$ be probability distributions on the input domain $\mathcal{X}$, and $\mathcal{H}$ be a hypothesis class for $\mathcal{X}$. Denote $\mathcal{A}_{\mathcal{H}}:=\{h^{-1}(1)|h\in\mathcal{H} \}$ as a collection of subsets of $\mathcal{X}$ that are the support of some hypotheses in $\mathcal{H}$. Then, the distance between $\mathcal{P}$ and $\mathcal{P}'$ based on $\mathcal{H}$ is defined as follows:
    \begin{equation} \label{def:hdiv:eq}
        d_{\mathcal{H}}(\mathcal{P}, \mathcal{P}')=2 \;\sup_{A\in\mathcal{A}_{\mathcal{H}}}|\mathbb{P}_{\mathcal{P}}[A] - \mathbb{P}_{\mathcal{P}'}[A]|.
    \end{equation}
\end{definition}
Now, we are ready to present the proof for Proposition \ref{thm:hdiv} in the next subsection.

\subsection{Proof for Theorem \ref{thm:err_bound}} \label{apdx:sec:thm:proof1}

\begin{theorem}[Restatement of Theorem \ref{thm:hdiv}] \label{thm:hdiv:apdx}
Let $X=(X_v,X_t)\in\mathcal{X}$ be a random variable of a multimodal input query. Given a stack of LVLM layers $f_l:\mathcal{X}\rightarrow\mathcal{Z}$ and a distance metric $d:\mathcal{Z}\times\mathcal{Z}\rightarrow[0,1]$, define a hypothesis $h=d(f_l(X_v,X_t),f_l(X_t)):\mathcal{X}\rightarrow[0,1]$ and a set of these hypotheses $\mathcal{H}$ that has a pseudo-dimension $c$. Then, for $\mathbf{D}_{l}(\mathcal{P}_{\star},F_{\theta}):=\mathbb{E}_{X\sim\mathcal{P}_{\star}}[h(X)]$ with any $\mathcal{P}_{\text{VT}}$, $\mathcal{P}_{\text{T}}$, and $\mathcal{P}_{\text{M}}:=\frac{\mathcal{P}_{\text{VT}}+\mathcal{P}_{\text{T}}}{2}$, and the empirical distributions $\mathcal{D}_{\text{VT}}\sim\mathcal{P}_{\text{VT}}$ and $\mathcal{D}_{\text{T}}\sim\mathcal{P}_{\text{T}}$ of $N$ samples for each, we have the following bounds w.p. at least $1-\delta$ for $0<\delta<1$,
    \begin{align}
        &\text{i}) \ \ 1-\mathbf{D}_{l}(\mathcal{D}_{\text{T}},F_{\theta})-\frac{1}{2}d_{\bar{\mathcal{H}}}(\mathcal{D}_{\text{VT}},\mathcal{D}_{\text{T}}) - \tilde{\mathcal{O}}_{\delta} \leq \mathbf{D}_{l}(\mathcal{P}_{\text{VT}},F_{\theta}), \label{thm:hdiv:eq:apdx} \\
        &\text{ii}) \ \ \frac{1}{2}-\frac{1}{4}d_{\bar{\mathcal{H}}}(\mathcal{D}_{\text{VT}},\mathcal{D}_{\text{T}})-\tilde{\mathcal{O}}_{\delta} \leq \mathbf{D}_l(\mathcal{P}_{\text{M}},F_{\theta}) \leq \frac{1}{2}+\frac{1}{4}d_{\bar{\mathcal{H}}}(\mathcal{D}_{\text{VT}},\mathcal{D}_{\text{T}})+\tilde{\mathcal{O}}_{\delta} 
    \label{thm:hdiv_tside:eq:apdx}
    \end{align} 
where $\bar{\mathcal{H}}:=\{\mathbb{I}_{|h(X)-h'(X)|>t}:h,h'\in\mathcal{H},0\leq t \leq 1 \}$ and $\tilde{\mathcal{O}}_{\delta}:=\mathcal{O}(\sqrt{\frac{1}{N}(\log \frac{1}{\delta}+c\log \frac{N}{c})})$.
\end{theorem}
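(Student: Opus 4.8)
## Proof Proposal

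The plan is to recast the representation-distance hypothesis as a binary domain-classifier and then import the standard generalization bounds from the domain-adaptation literature (Ben-David et al.), adapting them to the particular quantities $\mathbf{D}_l(\mathcal{P}_\star, F_\theta) = \mathbb{E}_{X\sim\mathcal{P}_\star}[h(X)]$. The key observation is that $\mathbf{D}_l(\mathcal{P}_{\text{VT}}, F_\theta) = 1 - \varepsilon_{\mathcal{P}_{\text{VT}}}(h, h^\star)$ and $\mathbf{D}_l(\mathcal{P}_{\text{T}}, F_\theta) = \varepsilon_{\mathcal{P}_{\text{T}}}(h, h^\star)$, since $h^\star \equiv 1$ on $\mathcal{P}_{\text{VT}}$ and $h^\star \equiv 0$ on $\mathcal{P}_{\text{T}}$ and $h \in [0,1]$. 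This reduces both inequalities to statements about risks of $h$ across the two domains, which is exactly the setting where $\mathcal{H}$-divergence bounds apply.

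For part (i), I would start from the triangle-inequality-style bound on risks: for any $h$ and the labeling function $h^\star$, $\varepsilon_{\mathcal{P}_{\text{VT}}}(h, h^\star) \leq \varepsilon_{\mathcal{P}_{\text{T}}}(h, h^\star) + d_{\bar{\mathcal{H}}}(\mathcal{P}_{\text{VT}}, \mathcal{P}_{\text{T}}) + \lambda^\star$, where $\lambda^\star$ is the combined optimal risk; here one uses that $h^\star$ takes the constant values $0$ and $1$ on the two domains so the ideal joint hypothesis achieves $\lambda^\star = 0$ (the constant functions realize it). Rearranging and substituting the identities above gives $1 - \mathbf{D}_l(\mathcal{P}_{\text{VT}}, F_\theta) \leq \mathbf{D}_l(\mathcal{P}_{\text{T}}, F_\theta) + d_{\bar{\mathcal{H}}}(\mathcal{P}_{\text{VT}}, \mathcal{P}_{\text{T}})$; I then replace population quantities with empirical ones via (a) a Hoeffding/uniform-convergence bound on $\mathbf{D}_l(\mathcal{P}_{\text{T}}) - \mathbf{D}_l(\mathcal{D}_{\text{T}})$ and (b) the classical sample-complexity bound $d_{\bar{\mathcal{H}}}(\mathcal{P}_{\text{VT}}, \mathcal{P}_{\text{T}}) \leq d_{\bar{\mathcal{H}}}(\mathcal{D}_{\text{VT}}, \mathcal{D}_{\text{T}}) + \tilde{\mathcal{O}}_\delta$, where the pseudo-dimension $c$ of $\mathcal{H}$ controls the complexity of $\bar{\mathcal{H}}$ (it stays $\mathcal{O}(c)$ up to log factors). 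A union bound over these deviation events absorbs everything into a single $\tilde{\mathcal{O}}_\delta = \mathcal{O}(\sqrt{\frac{1}{N}(\log\frac{1}{\delta} + c\log\frac{N}{c})})$, and the factor of $\tfrac{1}{2}$ on the divergence term comes from the factor-$2$ in Definition~\ref{def:hdiv}.

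For part (ii), I would write $\mathbf{D}_l(\mathcal{P}_{\text{M}}, F_\theta) = \tfrac{1}{2}\mathbf{D}_l(\mathcal{P}_{\text{VT}}, F_\theta) + \tfrac{1}{2}\mathbf{D}_l(\mathcal{P}_{\text{T}}, F_\theta) = \tfrac{1}{2}(1 - \varepsilon_{\mathcal{P}_{\text{VT}}}(h,h^\star)) + \tfrac{1}{2}\varepsilon_{\mathcal{P}_{\text{T}}}(h,h^\star)$, so that $\mathbf{D}_l(\mathcal{P}_{\text{M}}, F_\theta) - \tfrac{1}{2} = \tfrac{1}{2}(\varepsilon_{\mathcal{P}_{\text{T}}}(h,h^\star) - \varepsilon_{\mathcal{P}_{\text{VT}}}(h,h^\star))$. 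The absolute difference of the two risks is bounded by $d_{\bar{\mathcal{H}}}(\mathcal{P}_{\text{VT}}, \mathcal{P}_{\text{T}})$ (again plus $\lambda^\star = 0$), giving $|\mathbf{D}_l(\mathcal{P}_{\text{M}}, F_\theta) - \tfrac{1}{2}| \leq \tfrac{1}{4}d_{\bar{\mathcal{H}}}(\mathcal{P}_{\text{VT}}, \mathcal{P}_{\text{T}})$; passing to empirical divergence and adding the uniform-convergence slack yields the two-sided bound as stated. The main obstacle I anticipate is the bookkeeping around $\bar{\mathcal{H}}$: because $h$ is real-valued in $[0,1]$ rather than a $\{0,1\}$-classifier, the relevant divergence is the one induced by the threshold class $\bar{\mathcal{H}} = \{\mathbb{I}_{|h(X)-h'(X)|>t}\}$, and I need to verify that (a) this class still controls $|\mathbb{E}_{\mathcal{P}_{\text{VT}}} h - \mathbb{E}_{\mathcal{P}_{\text{T}}} h|$ via a layer-cake / integration-over-$t$ argument, and (b) its VC dimension is governed by the pseudo-dimension $c$ of $\mathcal{H}$ (an extra $+1$ or constant factor from the threshold parameter $t$), so that the sample-complexity term retains the claimed $c\log(N/c)$ form. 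Handling the non-independence or unequal sample sizes is not an issue here since the statement fixes $N$ samples per domain and only asks for a high-probability bound, so standard Hoeffding suffices.
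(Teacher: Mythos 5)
Your skeleton matches the paper's up to one critical step: like the paper, you use the identities $\varepsilon_{\mathcal{P}_{\text{VT}}}(h,h^{\star})=1-\mathbf{D}_{l}(\mathcal{P}_{\text{VT}},F_{\theta})$ and $\varepsilon_{\mathcal{P}_{\text{T}}}(h,h^{\star})=\mathbf{D}_{l}(\mathcal{P}_{\text{T}},F_{\theta})$, the same finite-sample replacements ($\varepsilon_{\mathcal{P}_{\star}}\leq\varepsilon_{\mathcal{D}_{\star}}+\tilde{\mathcal{O}}_{\delta}$ and $d_{\bar{\mathcal{H}}}(\mathcal{P}_{\text{VT}},\mathcal{P}_{\text{T}})\leq d_{\bar{\mathcal{H}}}(\mathcal{D}_{\text{VT}},\mathcal{D}_{\text{T}})+\tilde{\mathcal{O}}_{\delta}$), and the same mixture decomposition $\mathbf{D}_l(\mathcal{P}_{\text{M}},F_{\theta})-\tfrac12=\tfrac12(\varepsilon_{\mathcal{P}_{\text{T}}}-\varepsilon_{\mathcal{P}_{\text{VT}}})$ for (ii). The genuine gap is your population-level inequality: you route it through the Ben-David-style adaptation bound and set the ideal joint risk $\lambda^{\star}=0$ on the grounds that ``the constant functions realize it.'' This is false. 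A constant hypothesis $h\equiv c$ has combined risk $(1-c)+c=1$, and more generally $\lambda^{\star}=\min_{h'\in\mathcal{H}}\big[\varepsilon_{\mathcal{P}_{\text{VT}}}(h',h^{\star})+\varepsilon_{\mathcal{P}_{\text{T}}}(h',h^{\star})\big]=0$ would require some distance-induced hypothesis in $\mathcal{H}$ that equals $1$ a.e.\ on $\mathcal{P}_{\text{VT}}$ and $0$ a.e.\ on $\mathcal{P}_{\text{T}}$, i.e., a perfect separator of the two distributions---precisely the property whose strength the whole framework is trying to measure, and nothing in the hypotheses grants it. Retaining the true $\lambda^{\star}$, your argument only gives $\mathbf{D}_{l}(\mathcal{P}_{\text{VT}},F_{\theta})\geq 1-\mathbf{D}_{l}(\mathcal{D}_{\text{T}},F_{\theta})-\tfrac12 d_{\bar{\mathcal{H}}}(\mathcal{D}_{\text{VT}},\mathcal{D}_{\text{T}})-\lambda^{\star}-\tilde{\mathcal{O}}_{\delta}$, which is strictly weaker than statement (i), and the same spurious term propagates into your two-sided bound in (ii).

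The paper avoids $\lambda^{\star}$ entirely: it applies the cross-domain risk-difference lemma (Lemma~\ref{thm:err_bound}, from \citet{zhao2018adversarial}) directly to the pair $(h,h^{\star})$, obtaining $|\varepsilon_{\mathcal{P}_{\text{VT}}}(h,h^{\star})-\varepsilon_{\mathcal{P}_{\text{T}}}(h,h^{\star})|\leq\tfrac12 d_{\bar{\mathcal{H}}}(\mathcal{P}_{\text{VT}},\mathcal{P}_{\text{T}})$, i.e., $|1-\mathbf{D}_l(\mathcal{P}_{\text{VT}},F_{\theta})-\mathbf{D}_l(\mathcal{P}_{\text{T}},F_{\theta})|\leq\tfrac12 d_{\bar{\mathcal{H}}}(\mathcal{P}_{\text{VT}},\mathcal{P}_{\text{T}})$; the lower branch of the absolute value (reverse triangle inequality) plus the two concentration bounds yields (i), and keeping both branches and halving yields (ii). So the correct repair of your proposal is to drop the $\lambda^{\star}$ detour and use this risk-difference bound as the bridge. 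Two smaller slips to tidy if you rewrite: in (ii) you say the risk difference is bounded by $d_{\bar{\mathcal{H}}}$ yet conclude with the $\tfrac14 d_{\bar{\mathcal{H}}}$ coefficient, which needs the $\tfrac12 d_{\bar{\mathcal{H}}}$ bound on the risk difference; and the quantity your layer-cake check must control is $|\mathbb{E}_{\mathcal{P}_{\text{VT}}}[1-h]-\mathbb{E}_{\mathcal{P}_{\text{T}}}[h]|$ (the risk difference with respect to $h^{\star}$), not $|\mathbb{E}_{\mathcal{P}_{\text{VT}}}[h]-\mathbb{E}_{\mathcal{P}_{\text{T}}}[h]|$, so the comparison pair fed into $\bar{\mathcal{H}}$ is $(h,h^{\star})$ rather than $(h,0)$.
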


\begin{proof}
Note the lemma below that provides a connection between the difference in the expected errors across two distributions and their distributional discrepancy.
\begin{lemma}[\citet{zhao2018adversarial}] \label{thm:err_bound}
For $h,h'\in\mathcal{H}:=\{h:\mathcal{X}\rightarrow [0,1]\}$ assume that $\mathcal{H}$ has a finite pseudo dimension $d$. For any distribution $\mathcal{P}$ and $\mathcal{P}'$ over $\mathcal{X}$, 
    \begin{align}
        |\varepsilon_{\mathcal{P}}(h,h')-\varepsilon_{\mathcal{P'}}(h,h')| \leq \frac{1}{2}d_{\bar{\mathcal{H}}}(\mathcal{P},\mathcal{P}'),
    \end{align} \label{thm:err_bound:eq}
    where $\bar{\mathcal{H}}:=\{\mathbb{I}_{|h(x)-h'(x)|>t}:h,h'\in\mathcal{H},0\leq t\leq 1\}$.
\end{lemma}
See Lemma 1 of \citet{zhao2018adversarial} for the proof. We start our derivation of Proposition \ref{thm:hdiv} from the ineq. \ref{thm:err_bound:eq} as below,
\begin{align}
    \frac{1}{2}d_{\bar{\mathcal{H}}}(\mathcal{P}_{\text{VT}},\mathcal{P}_{\text{T}}) &\geq |\varepsilon_{\mathcal{P}_{\text{VT}}}(h,h^{\star})-\varepsilon_{\mathcal{P}_{\text{T}}}(h,h^{\star})| \\
    &= \big| |\mathbf{D}_l(\mathcal{P}_{\text{VT}},F_{\theta})-1| - |\mathbf{D}_l(\mathcal{P}_{\text{T}},F_{\theta})-0|\big| \\
    &= | 1 - \mathbf{D}_l(\mathcal{P}_{\text{VT}},F_{\theta}) - \mathbf{D}_l(\mathcal{P}_{\text{T}},F_{\theta})| \label{apdx:hdiv_ineq_branch} \\ 
    &\geq | 1 - \mathbf{D}_l(\mathcal{P}_{\text{VT}},F_{\theta})| - |\mathbf{D}_l(\mathcal{P}_{\text{T}},F_{\theta})| \\
    &=  1 - \mathbf{D}_l(\mathcal{P}_{\text{VT}},F_{\theta}) - \mathbf{D}_l(\mathcal{P}_{\text{T}},F_{\theta}), \label{apdx:hdiv_ineq_final}
\end{align}
where the first equality holds by definition, the first inequality holds by the reverse triangular inequality, and the second and fourth equality hold given $0\leq\mathbf{D}_l(\mathcal{P}_{\star},F_{\theta})\leq 1$.

In the meantime, for the empirical distributions $\mathcal{D}_{\text{VT}}\sim \mathcal{P}_{\text{VT}}$ and $\mathcal{D}_{\text{T}}\sim \mathcal{P}_{\text{T}}$ of $N$ samples for each, given $0<\delta<1$, we have the following approximation error bounds with probability at least $1-\delta$ for any $h\in\mathcal{H}$ (See Lemma 5 and Lemma 6 of \citet{zhao2018adversarial}),
\begin{align}
    \varepsilon_{\mathcal{P}_{\star}}(h,h^{\star}) \leq \varepsilon_{\mathcal{D}_{\star}}(h,h^{\star}) + \mathcal{O}(\sqrt{\frac{1}{N}(\log \frac{1}{\delta}+c\log \frac{N}{c})}), \label{apdx:error_ineq_approx} 
    \\
    d_{\bar{\mathcal{H}}}(\mathcal{P}_{\text{VT}},\mathcal{P}_{\text{T}}) \leq d_{\bar{\mathcal{H}}}(\mathcal{D}_{\text{VT}},\mathcal{D}_{\text{T}}) + \mathcal{O}(\sqrt{\frac{1}{N}(\log \frac{1}{\delta}+c\log \frac{N}{c})}), \label{apdx:hdiv_ineq_approx} 
\end{align} 
where $Pdim(\mathcal{H})=c$.

Then, by plugging the above inequality (Ineq. \ref{apdx:hdiv_ineq_approx}) into the Ineq. \ref{apdx:hdiv_ineq_branch}, we have,
\begin{align}
    1 - \frac{1}{2}d_{\bar{\mathcal{H}}}(\mathcal{D}_{\text{VT}},\mathcal{D}_{\text{T}}) - \mathcal{O}(\sqrt{\frac{1}{N}(\log \frac{1}{\delta}+c\log \frac{N}{c})}) &\leq \mathbf{D}_l(\mathcal{P}_{\text{VT}},F_{\theta}) + \mathbf{D}_l(\mathcal{P}_{\text{T}},F_{\theta}), \label{thm:hdiv:proof:eq1} \\
    1 + \frac{1}{2}d_{\bar{\mathcal{H}}}(\mathcal{D}_{\text{VT}},\mathcal{D}_{\text{T}}) + \mathcal{O}(\sqrt{\frac{1}{N}(\log \frac{1}{\delta}+c\log \frac{N}{c})})&\geq \mathbf{D}_l(\mathcal{P}_{\text{VT}},F_{\theta}) + \mathbf{D}_l(\mathcal{P}_{\text{T}},F_{\theta}), \label{thm:hdiv:proof:eq2}
\end{align}
where we derive the first statement of Proposition \ref{thm:hdiv:apdx} from the Ineq. \ref{thm:hdiv:proof:eq1}, and the second statement of that by combining both Ineq. \ref{thm:hdiv:proof:eq1} and Ineq. \ref{thm:hdiv:proof:eq2}, that complete the proof.

\end{proof}

\section{Impact Statement} 
Language prior represents a pathological behavioral pattern in LVLMs, where the model overly relies on its linguistic knowledge and fails to properly ground its predictions in the visual input. This phenomenon underlies critical issues such as hallucination, modality misalignment, and failure cases in vision-centric reasoning. It also suggests that current LVLMs may not be operating in the modality-aware manner we expect—even when their outputs appear plausible (as the result of the vanilla next-token-prediction training paradigm).
One of the main challenges in mitigating language prior lies in its vague and subjective nature: there exists no clear definition or quantitative measure of ``language prior'' in a dataset or task. Consequently, efforts to balance visual and textual information during training or fine-tuning often rely on heuristics or manual annotations.

Our work sheds light on this issue by proposing a formal framework to characterize and quantify the language prior through the model's own behavior. This makes the problem not only more visible but also more measurable. If the degree of language prior can be reliably estimated from within the model, we can begin to incorporate this signal directly into training objectives or inference strategies in a principled way.
In this way, our framework provides a principled foundation for deeper understanding and offers practical tools for improving real-world multimodal systems.

In addition to the ultimate goal, i.e., understanding and quantifying LP of LVLM, our novel method, \textit{contrastive chain-of-embeddings}, on the path to pursue that goal can also create a rich inspiration for a line of works on layer-wise representation analysis~\citep{skean2025layer}, layer-specific adaptive training approach for LVLMs~\citep{bachu2025layerwise,oh2025visual}, and inclusive AI applications with unbiased multimodal alignment~\citep{kim2025world} or representation-centric multi-linguality~\citep{jung2024mitigating,schut2025do}, which ultimately contribute to building a trustworthy multimodal AI system for everyone.

\section{Disclosure of LLM Usage} 
Some portions of this paper were polished and refined with the assistance of LLM tools (\emph{e.g.}, ChatGPT) to improve clarity, fluency, and consistency in writing. We also harnessed a coding agent (\emph{e.g.}, Cursor) to write some simple utility functions after double-checking. All technical content, experimental results, and analytical conclusions were independently developed by the authors without the use of LLMs.

\section{Ethics Statement}
This work conducts empirical and analytical studies on the internal behavior of LVLMs, with the goal of understanding and quantifying their reliance on language priors and the extent of visual information integration during inference. To pursue high standards of scientific excellence, we propose a formal framework with clear definitions of all used terms and conduct validation at scale, \emph{e.g.}, 60 combinations of models and datasets, and we further provide theoretical analyses on our framework. Our study does not involve any human subjects, personally identifiable information, or sensitive data. All experiments are conducted using publicly available models and benchmark datasets that are widely adopted in the multimodal learning community.
Our proposed metrics and analyses are intended for research and diagnostic purposes. By providing tools to diagnose when LVLMs rely on text versus vision, we aim to support more accountable model development and contribute positively to the responsible advancement of AI. We encourage future work to further validate these findings under more diverse real-world conditions.

\section{Reproducibility Statement}
All of the models and datasets we used in this work are publicly available. To further ensure the reproducibility of our findings, we provide comprehensive descriptions of all experimental settings, including dataset preprocessing, model configurations, metric definitions, and evaluation protocols, in Section~\ref{sec:exp} and Appendix~\ref{apdx:sec:implementation}. Our framework does not require model re-training or fine-tuning, and all evaluations are conducted in a zero-shot setting using publicly available model checkpoints, which minimizes computational and hardware requirements.
We release the complete codebase for our analysis framework, including tools for data preparation, TVI computation, and visualization, at the following repository: \href{https://github.com/deeplearning-wisc/understanding_lp}{\faGithub}.

\end{document}